\documentclass{article} 
\usepackage{iclr2022_conference,times}


\usepackage{amsmath,amsfonts,bm}









\def\eqref#1{equation~\ref{#1}}









\def\1{\bm{1}}










\DeclareMathAlphabet{\mathsfit}{\encodingdefault}{\sfdefault}{m}{sl}
\SetMathAlphabet{\mathsfit}{bold}{\encodingdefault}{\sfdefault}{bx}{n}













\usepackage{hyperref}
\usepackage{url}
\usepackage{amsmath}
\usepackage{algorithm}
\usepackage{algorithmic}
\usepackage{graphicx}
\usepackage{color}
\usepackage{amsthm}
\usepackage{bigfoot}
\newtheorem{claim}{Claim}
\newtheorem{theorem}{Theorem}
\newtheorem{proposition}{Proposition}
\newtheorem{lemma}{Lemma}
\newtheorem{corollary}{Corollary}
\newtheorem{example}{Example}

\title{Kernel Deformed Exponential Families for Sparse Continuous Attention}


\author{Alexander Moreno\thanks{corresponding author} \& Supriya Nagesh  \\
Georgia Institute of Technology\\
Atlanta, GA \\
\texttt{\{alexander.f.moreno,snagesh7\}@gatech.edu} \\
\And
Zhenke Wu \& Walter Dempsey \\
University of Michigan \\
Ann Arbor, MI\\
\texttt{\{wdem,zhenkewu\}@umich.edu} \\
\AND
James M. Rehg\\
Georgia Institute of Technology\\
Atlanta, GA\\
\texttt{rehg@gatech.edu}
}

%

\iclrfinalcopy 
\begin{document}

\maketitle

\begin{abstract}
Attention mechanisms take an expectation of a data representation with respect to probability weights. This creates summary statistics that focus on important features. Recently, \cite{martins2020sparse,martins2021sparse} proposed continuous attention mechanisms, focusing on unimodal attention densities from the exponential and deformed exponential families: the latter has sparse support. \cite{farinhas2021multimodal} extended this to use Gaussian mixture attention densities, which are a flexible class with dense support. In this paper, we extend this to two general flexible classes: kernel exponential families \citep{canu2006kernel} and our new sparse counterpart kernel \textit{deformed} exponential families. Theoretically, we show new existence results for both kernel exponential and deformed exponential families, and that the deformed case has similar approximation capabilities to kernel exponential families. Experiments show that kernel deformed exponential families can attend to multiple compact regions of the data domain.

\end{abstract}

\section{Introduction}

Attention mechanisms take weighted averages of data representations \citep{bahdanau2015neural}, where the weights are a function of input objects. These are then used as inputs for prediction. Discrete attention 1) cannot easily handle data where observations are irregularly spaced 2) attention maps may be scattered, lacking focus. \cite{martins2020sparse,martins2021sparse} extended attention to continuous settings, showing that attention densities maximize the regularized expectation of a function of the data location (i.e. time). Special case solutions lead to exponential and deformed exponential families: the latter has sparse support. They form a continuous data representation and take expectations with respect to attention densities. Using measure theory to unify discrete and continuous approaches, they show transformer self-attention \citep{vaswani2017attention} is a special case of their formulation.

\cite{martins2020sparse,martins2021sparse} explored unimodal attention densities: these only give high importance to one region of data. \cite{farinhas2021multimodal} extended this to use multi-modal mixture of Gaussian attention densities. However 1) mixture of Gaussians do not lie in either the exponential or deformed exponential families, and are difficult to study in the context of \cite{martins2020sparse,martins2021sparse} 2) they have dense support. Sparse support can say that certain regions of data do not matter: a region of time has \textit{no} effect on class probabilities, or a region of an image is \textit{not} some object. We would like to use multimodal exponential and deformed exponential family attention densities, and understand how \cite{farinhas2021multimodal} relates to the optimization problem of \cite{martins2020sparse,martins2021sparse}.

This paper makes three contributions: 1) we introduce kernel \textit{deformed} exponential families, a multimodal class of densities with sparse support and apply it along with the multimodal kernel exponential families \citep{canu2006kernel} to attention mechanisms. The latter have been used for density estimation, but not weighting data importance 2) we theoretically analyze normalization for both kernel exponential and deformed exponential families in terms of 
a base density and kernel, and show approximation properties for the latter 3) we apply them to real world datasets and show that kernel deformed exponential families learn flexible continuous attention densities with sparse support. Approximation properties for the kernel deformed are challenging: similar kernel exponential family results \citep{sriperumbudur2017density} relied on standard exponential and logarithm properties to bound the difference of the log-partition functional at two functions: these do not hold for deformed analogues. We provide similar bounds via the functional mean value theorem along with bounding the Frechet derivative of the deformed log-partition functional.

The paper is organized as follows: we review continuous attention \citep{martins2020sparse,martins2021sparse}. We then describe how mixture of Gaussian attention densities, used in \cite{farinhas2021multimodal}, solve a different optimization problem. We next describe kernel exponential families and give  novel normalization condition relating the kernel growth to the base density's tail decay. We then propose kernel deformed exponential families, which can have support over disjoint regions. We describe normalization and prove approximation capabilities. Next we describe use of these densities for continuous attention, including experiments where we show that the kernel deformed case learns multimodal attention densities with sparse support. We conclude with limitations and future work.

\section{Related Work}


\textbf{Attention Mechanisms} closely related are \cite{martins2020sparse,martins2021sparse,farinhas2021multimodal,tsai2019transformer,shukla2021multitime}. \cite{martins2020sparse,martins2021sparse} frame continuous attention as an expectation of a value function over the domain with respect to a density, where the density solves an optimization problem. They only used unimodal exponential and deformed exponential family densities: we extend this to the multimodal setting by leveraging kernel exponential families and proposing a deformed counterpart. \cite{farinhas2021multimodal} proposed a multi-modal continuous attention mechanism via a mixture of Gaussians approach. We show that this solves a slightly different optimization problem from \cite{martins2020sparse,martins2021sparse}, and extend to two further general density classes. \cite{shukla2021multitime} provide an attention mechanism for irregularly sampled time series by use of a continuous-time kernel regression framework, but do not actually take an expectation of a data representation over time with respect to a continuous pdf, evaluating the kernel regression model at a fixed set of time points to obtain a discrete representation. This describes importance of data at a set of points rather than over regions. Other papers connect attention and kernels, but focus on the discrete attention setting \citep{tsai2019transformer,choromanski2020rethinking}. Also relevant are temporal transformer papers, including \cite{xuneurips2019,li2019neurips,li2020behrt,song2018attend}. However none have continuous attention densities.

\textbf{Kernel Exponential Families} \cite{canu2006kernel} proposed kernel exponential families: \cite{sriperumbudur2017density} analyzed theory for density estimation. \cite{wenliang2019learning} parametrized the kernel with a deep neural network. Other density estimation papers include \cite{arbel2018kernel,dai2019kernel,sutherland2018efficient}. We apply kernel exponential families as attention densities to \textit{weight} a value function which represents the data, rather than to estimate the data density, and extend similar ideas to kernel deformed exponential families with sparse support.

\cite{wenliang2019learning} showed a condition for an unnormalized kernel exponential family density to have a finite normalizer. However, they used exponential power base densities. We instead relate kernel growth rates to the base density tail decay, allowing non-symmetric base densities.

To summarize our theoretical contributions: 1) introducing kernel \textit{deformed} exponential families with approximation and normalization analysis 2) improved kernel exponential family normalization results 3) characterizing \cite{farinhas2021multimodal} in terms of the framework of \cite{martins2020sparse,martins2021sparse}.

\section{Continuous Attention Mechanisms}\label{sec:cts-attention}



An attention mechanism involves: 1) the value function approximates a data representation. This may be the original data or a learned representation. 2) the attention density is chosen to be 'similar' to another data representation, encoding it into a density 3) the context combines the two, taking an expectation of the value function with respect to the attention density. Formally, the context is
\begin{align}\label{eqn:context}
    c&=\mathbb{E}_{T\sim p}[V(T)].
\end{align}
Here $V(t)$ the value function approximates a data representation, $T\sim p(t)$ is the random variable or vector for locations (temporal, spatial, etc), and $p(t)$ is the attention density.

To choose the attention density $p$, one takes a data representation $f$ and finds $p$ 'similar' to $f$ and thus to a data representation, but regularizing $p$. \cite{martins2020sparse,martins2021sparse} did this, providing a rigorous formulation of attention mechanisms. Given a probability space $(S,\mathcal{A},Q)$, let $\mathcal{M}^1_+(S)$ be the set of densities with respect to $Q$. Assume that $Q$ is dominated by a measure $\nu$ (i.e. Lebesgue) and that it has density $q_0=\frac{dQ}{d\nu}$ with respect to $\nu$. Let $S\subseteq \mathbb{R}^D$, $\mathcal{F}$ be a function class, and $\Omega:\mathcal{M}_+^1(S)\rightarrow \mathbb{R}$ be a lower semi-continuous, proper, strictly convex functional.

Given $f\in \mathcal{F}$, an \textit{attention density} \citep{martins2020sparse} $\hat{p}:\mathcal{F}\rightarrow \mathbb{R}_{\geq 0}$ solves
\begin{align}\label{eqn:entropy-regularized-solution}
    \hat{p}[f]&=\arg\max_{p\in \mathcal{M}_+^1(S)}\langle p,f\rangle_{L^2(Q)}-\Omega(p).
\end{align}
This maximizes regularized $L^2$ similarity between $p$ and a data representation $f$. If $\Omega(p)$ is the negative differential entropy, the attention density is Boltzmann Gibbs
\begin{align}\label{eqn:Boltzman Gibbs}
    \hat{p}[f](t)&=\exp(f(t)-A(f)),
\end{align}
where $A(f)$ ensures $\int_{S}\hat{p}[f](t) dQ=1$.  If $f(t)=\theta^T \phi(t)$ for parameters and statistics $\theta\in \mathbb{R}^M,\phi(t)\in \mathbb{R}^M$ respectively, Eqn. \ref{eqn:Boltzman Gibbs} becomes an exponential family density. For $f$ in a reproducing kernel Hilbert space $
\mathcal{H}$, it becomes a kernel exponential family density \citep{canu2006kernel}, which we propose to use as an alternative attention density.

One desirable class would be heavy or thin tailed exponential family-like densities. In exponential families, the support, or non-negative region of the density, is controlled by the measure $Q$. Letting $\Omega(p)$ be the $\alpha$-Tsallis negative entropy $\Omega_{\alpha}(p)$ \citep{tsallis1988possible},
\begin{align*}
    \Omega_{\alpha}(p)&=\begin{cases}
    \frac{1}{\alpha(\alpha-1)}\left(\int_{S}p(t)^\alpha dQ -1\right),\alpha\neq 1;\\
    \int_{S}p(t)\log p(t)dQ,\alpha=1,
    \end{cases}
\end{align*}
then $\hat{p}[f]$ for $f(t)=\theta^T \phi(t)$ lies in the deformed exponential family \citep{tsallis1988possible,naudts2004estimators}
\begin{align}\label{eqn:deformed-exponential}
    \hat{p}_{\Omega_{\alpha}}[f](t)&=\exp_{2-\alpha}(\theta^T \phi(t)-A_{\alpha}(f)),
\end{align}
where $A_{\alpha}(f)$ again ensures normalization and the density uses the $\beta$-exponential
\begin{align}\label{eqn:beta-exponential}
    \exp_{\beta}(t)&=\begin{cases}
    [1+(1-\beta)t]_+^{1/(1-\beta)},\beta\neq 1;\\
    \exp(t),\beta=1.
    \end{cases}
\end{align}
For $\beta<1$, Eqn. \ref{eqn:beta-exponential} and thus deformed exponential family densities for $1<\alpha\leq 2$ can return $0$ values. Values $\alpha>1$ (and thus $\beta<1$) give thinner tails than the exponential family, while $\alpha<1$ gives fatter tails. Setting $\beta=0$ is called \textit{sparsemax} \citep{martins2016softmax}. In this paper, we assume $1<\alpha\leq 2$, which is the sparse case studied in \cite{martins2020sparse}. We again propose to replace $f(t)=\theta^T \phi(t)$ with $f\in \mathcal{H}$, which leads to the novel \textit{kernel deformed exponential families}.


Computing Eqn. \ref{eqn:context}'s context vector requires parametrizing $V(t)$. \cite{martins2020sparse} obtain a value function $V:S\rightarrow \mathbb{R}^D$ parametrized by $\textbf{B}\in \mathbb{R}^{D\times N}$ by applying regularized multivariate linear regression to estimate $V(t;\textbf{B})=\textbf{B}\Psi(t)$, where $\Psi=\{\psi_n\}_{n=1}^N$ is a set of basis functions. Let $L$ be the number of observation locations (times in a temporal setting), $O$ be the observation dimension, and $N$ be the number of basis functions. This involves regressing the observation matrix $\textbf{H}\in \mathbb{R}^{O\times L}$ on a matrix $\textbf{F} \in \mathbb{R}^{N\times L}$ of basis functions $\{\psi_n\}_{n=1}^N$ evaluated at observation locations $\{t_l\}_{l=1}^L$
\begin{align}\label{eqn:multivariate-regression}
    \textbf{B}^*&=\arg\min_\textbf{B} \Vert \textbf{B}\textbf{F}-\textbf{H}\Vert_F^2+\lambda \Vert \textbf{B}\Vert_F^2.
\end{align}

\subsection{Gaussian Mixture Model}

\cite{farinhas2021multimodal} used mixture of Gaussian attention densities, but did not relate this to the optimization definition of attention densities in \cite{martins2020sparse,martins2021sparse}. In fact their attention densities solve a related but different optimization problem. \cite{martins2020sparse,martins2021sparse} show that exponential family attention densities maximize a regularized linear predictor of the expected sufficient statistics of locations. In contrast, \cite{farinhas2021multimodal} find a joint density over locations and latent states, and maximize a regularized linear predictor of the expected joint sufficient statistics. They then take the marginal location densities to be the attention densities.

Let $\Omega(p)$ be Shannon entropy and consider two optimization problems:
\begin{gather*}
    \arg\max_{p\in \mathcal{M}_+^1(S)}\langle \theta, \mathbb{E}_p[\phi(T)]\rangle_{l^2} -\Omega(p)\\
    \arg\max_{p\in \mathcal{M}_+^1(S)} \langle \theta , \mathbb{E}_p[ \phi(T,Z)]\rangle_{l^2}-\Omega(p)
\end{gather*}
The first is Eqn. \ref{eqn:entropy-regularized-solution} with $f=\theta^T \phi(t)$ and rewritten to emphasize expected sufficient statistics. 
If one solves the second with variables $Z$, we recover an Exponential family joint density
\begin{align*}
    \hat{p}_{\Omega_{\alpha}}[f](t,z)=\exp(\theta^T \phi(t,z)-A(\theta)).
\end{align*}
This encourages the joint density of $T,Z$ to be similar to a \textit{complete data} representation $\theta^T \phi(t,z)$ of both location variables $T$ and latent variables $Z$, instead of encouraging the density of $T$ to be similar to an observed data representation $\theta^T \phi(t)$. The latter optimization is equivalent to
\begin{gather*}
    \arg\max_{p\in \mathcal{M}_+^1(S)} \Omega(p)\\
    \textrm{s.t.}\\
    \mathbb{E}_{p(T,Z)}[\phi_m(T,Z)]=c_m, m=1,\cdots,M.
\end{gather*}
The constraint terms $c_m$ are determined by $\theta$. Thus, this maximizes the joint entropy of $Z$ and $T$, subject to constraints on the expected joint sufficient statistics.

To recover EM learned Gaussian mixture densities, one must select $\phi_m$ so that the marginal distribution of $T$ will be a mixture of Gaussians, and relate $c_m$ to the EM algorithm used to learn the mixture model parameters. For the first, assume that $Z$ is a multinomial random variable with a single trial taking $|Z|$ possible values and let $\phi(t,z)=(I(z=1),\cdots,I(z=|Z|-1),I(z=1)t,I(z=1)t^2,\cdots, I(z=|Z|) t,I(z=|Z|) t^2)$. These are multinomial sufficient statistics with a single trial, followed by the sufficient statistics of $|Z|$ Gaussians multiplied by indicators for each $z$. Then $p(T|Z)$ will be Gaussian, $p(Z)$ will be multinomial, and $p(T)$ will be a Gaussian mixture. For contraints, \cite{farinhas2021multimodal} have
\begin{align}\label{eqn:em-moment-matching}
    \mathbb{E}_{p(T,Z)}[\phi_m(T,Z)]=\sum_{l=1}^L w_l\sum_{k=1}^{|Z|} p_{\textrm{old}}(Z=k|t_l)\phi_m(t_l,k), m=1,\cdots,M 
\end{align}
at each EM iteration. Here $p_{\textrm{old}}(Z|t_l)$ is the previous iteration's latent state density conditional on the observation value, $w_l$ are discrete attention weights, and $t_l$ is a discrete attention location. That EM has this constraint was shown in \cite{wang2012latent}. Intuitively, this matches the expected joint sufficient statistics to those implied by discrete attention over locations, taking into account the dependence between $z$ and $t_l$ given by old model parameters. An alternative is simply to let $\theta$ be the output of a neural network. While the constraints lack the intuition of Eqn. \ref{eqn:em-moment-matching}, it avoids the need to select an initialization. We focus on this case and use it for our baselines: both approaches are valid.


\section{Kernel Exponential and Deformed Exponential Families}
We now use kernel exponential families and a new deformed counterpart to obtain flexible attention densities solving Eqn.~\ref{eqn:entropy-regularized-solution} with the same regularizers. We first review kernel exponential families. We then give a novel theoretical result describing when an unnormalized kernel exponential family density can be normalized. Next we introduce kernel deformed exponential families, extending kernel exponential families to have either sparse support or fatter tails: we focus on the former. These can attend to multiple non-overlapping time intervals or spatial regions. We show similar normalization results based on the choice of kernel and base density. Following this we show approximation theory. We conclude by showing how to compute attention densities in practice.

Kernel exponential families \citep{canu2006kernel} extend exponential family distributions, replacing $f(t)=\theta^T \phi(t)$ with $f$ in a reproducing kernel Hilbert space $\mathcal{H}$ \citep{aronszajn1950theory} with kernel $k:S\times S\rightarrow \mathbb{R}$. Densities can be written as
\begin{align*}
    p(t)&=\exp(f(t)-A(f))\\
    &=\exp(\langle f,k(\cdot,t)\rangle_{\mathcal{H}}\rangle-A(f)),
\end{align*}
where the second equality follows from the reproducing property. A challenge is to choose $\mathcal{H}, Q$ so that a normalizing constant exists, i.e., $\int_{S} \exp(f(t))dQ<\infty$. Kernel exponential family distributions can approximate any continuous density over a compact domain arbitrarily well in KL divergence, Hellinger, and $L^p$ distance \citep{sriperumbudur2017density}. However relevant integrals including the normalizing constant generally require numerical integration. 


To avoid infinite dimensionality one generally assumes a representation of the form
\begin{align*}
    f&=\sum_{i=1}^I \gamma_i k(\cdot,t_i),
\end{align*}
where for density estimation \citep{sriperumbudur2017density} the $t_i$ are the observation locations. However, this requires using one parameter per observation value. This level of model complexity may not be necessary, and often one chooses a set of \textit{inducing points} \citep{titsias2009variational} $\{t_i\}_{i=1}^I$ where $I$ is less than the number of observation locations.

For a given pair $\mathcal{H},k$, how can we choose $Q$ to ensure that the normalization constant exists? We first give a simple example of $\mathcal{H},f$ and $Q$ where the normalizing constant \textit{does not} exist.
\begin{example}
Let $Q$ be the law of a $\mathcal{N}(0,1)$ distribution and $S=\mathbb{R}$. Let $\mathcal{H}=\textrm{span}\{t^3,t^4\}$ with $k(x,y)=x^3y^3+x^4y^4$ and $f(t)=t^3+t^4=k(t,1)$. Then
\begin{align}
    \int_S \exp(f(t))dQ&=\int_{\mathbb{R}} \exp(\frac{t^2}{2}+t^3+t^4)dt
\end{align}
where the integral diverges.
\end{example}

\begin{figure}
    \centering
    \includegraphics[width=0.32\columnwidth]{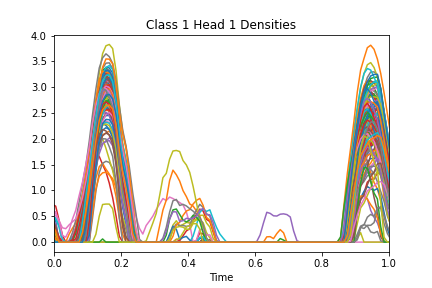}
    \includegraphics[width=0.32\columnwidth]{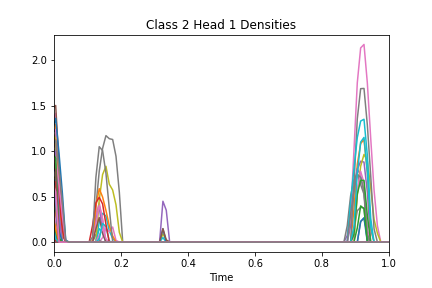}
    \includegraphics[width=0.32\columnwidth]{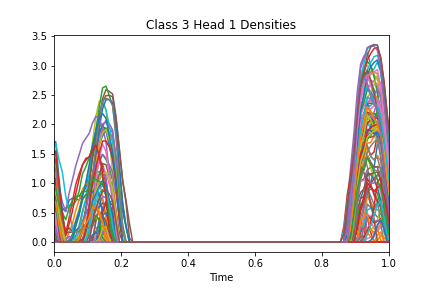}
    \caption{Attention densities for kernel deformed exponential families for the first attention head of the uWave experiment and all test set participants for three classes. The densities are sparse and each have support over different non-overlapping time intervals, which cannot be done with either Gaussian mixtures or exponential families. They also attend to similar regions within each class.}
    \label{fig:uwave-attention-densities}
\end{figure}

\subsection{Theory for Kernel Exponential Families}

We provide sufficient conditions for $Q$ and $\mathcal{H}$ so that $A(f)$ the log-partition function exists. We relate $\mathcal{H}$'s kernel growth rate to the tail decay of the random variable or vector $T_Q$ with law $Q$.
\begin{proposition}\label{prop:normalization}
Let $\tilde{p}(t)=\exp(f(t))$ where $f\in \mathcal{H}$ an RKHS with kernel $k$. Assume $k(t,t)\leq L_k \Vert t\Vert^{\xi}_2+C_k$ for constants $L_k,C_k,\xi>0$. Let $Q$ be the law of a random vector $T_Q$, so that $Q(A)=P(T_Q\in A)$. Assume $\forall u$ s.t. $\Vert u\Vert_2=1$,
\begin{align}\label{eqn:tail-bound}
    P(|u^T T_Q|\geq z)\leq C_q\exp(-vz^{\eta})
\end{align}
for some constants $\eta>\frac{\xi}{2},C_Q,v>0$. Then
\begin{align*}
    \int_{S}\tilde{p}(t)dQ<\infty.
\end{align*}
\end{proposition}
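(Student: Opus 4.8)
The plan is to reduce the claim to a stretched-exponential moment bound on $\Vert T_Q\Vert_2$ and then exploit the assumed tail decay. First I would control $f$ pointwise: by the reproducing property and Cauchy--Schwarz, $f(t)=\langle f,k(\cdot,t)\rangle_{\mathcal{H}}\leq \Vert f\Vert_{\mathcal{H}}\,\Vert k(\cdot,t)\Vert_{\mathcal{H}}=\Vert f\Vert_{\mathcal{H}}\sqrt{k(t,t)}$, since $\Vert k(\cdot,t)\Vert_{\mathcal{H}}^2=k(t,t)$. Applying the kernel growth hypothesis together with the subadditivity $\sqrt{a+b}\leq\sqrt{a}+\sqrt{b}$ yields $f(t)\leq \Vert f\Vert_{\mathcal{H}}\big(\sqrt{L_k}\,\Vert t\Vert_2^{\xi/2}+\sqrt{C_k}\big)$. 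Hence there exist constants $C_1,C_2>0$ depending only on $\Vert f\Vert_{\mathcal{H}},L_k,C_k$ with $\tilde{p}(t)=\exp(f(t))\leq C_1\exp\!\big(C_2\Vert t\Vert_2^{\xi/2}\big)$, so it suffices to show $\mathbb{E}\big[\exp(C_2\Vert T_Q\Vert_2^{\xi/2})\big]<\infty$.

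The next step is to convert the directional tail bound~(\ref{eqn:tail-bound}) into a tail bound on the Euclidean norm. Writing $T_Q$ in coordinates and using $\Vert T_Q\Vert_2\leq\sqrt{D}\max_i|e_i^{\top}T_Q|$, a union bound over the $D$ coordinate projections (each a valid choice of unit vector $u$) gives
\begin{align*}
P\big(\Vert T_Q\Vert_2\geq z\big)\leq D\,C_q\exp\!\big(-v(z/\sqrt{D})^{\eta}\big)=D\,C_q\exp(-v'z^{\eta}),\qquad v'=vD^{-\eta/2}.
\end{align*}
Thus the norm inherits the stretched-exponential tail with the same exponent $\eta$.

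Finally I would evaluate the expectation through the layer-cake identity. For the increasing function $g(r)=\exp(C_2 r^{\xi/2})$ with $g(0)=1$,
\begin{align*}
\mathbb{E}\big[g(\Vert T_Q\Vert_2)\big]=1+\int_0^\infty g'(r)\,P\big(\Vert T_Q\Vert_2\geq r\big)\,dr\leq 1+C_3\int_0^\infty r^{\xi/2-1}\exp\!\big(C_2 r^{\xi/2}-v'r^{\eta}\big)\,dr
\end{align*}
for a constant $C_3>0$. Near $r=0$ the factor $r^{\xi/2-1}$ is integrable because $\xi/2-1>-1$; at infinity the exponent $C_2 r^{\xi/2}-v'r^{\eta}$ is eventually dominated by $-\tfrac{v'}{2}r^{\eta}\to-\infty$, precisely because $\eta>\xi/2$, so the integrand decays faster than any polynomial and the integral converges.

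I expect the main obstacle to be the two places where the hypotheses must be used in exactly the right way: the passage from per-direction tails to a usable norm tail (handled cleanly by coordinate projections and a union bound), and above all the final integrability argument at infinity, whose convergence is guaranteed exactly by $\eta>\xi/2$. If that inequality were reversed, the positive term $C_2 r^{\xi/2}$ would eventually dominate $-v'r^{\eta}$ and the integral would diverge, matching the Gaussian counterexample preceding the proposition (there $\xi=8$ so $\xi/2=4$, while the Gaussian tail has $\eta=2<4$).
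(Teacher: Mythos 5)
Your proof is correct, and the first half (bounding $f(t)$ via the reproducing property, Cauchy--Schwarz, and the kernel growth assumption to get $\tilde{p}(t)\leq C_1\exp(C_2\Vert t\Vert_2^{\xi/2})$) is exactly what the paper does. Where you diverge is in establishing the stretched-exponential moment $\mathbb{E}[\exp(C_2\Vert T_Q\Vert_2^{\xi/2})]<\infty$. The paper stays coordinate-wise: it bounds $\Vert t\Vert_2^{\xi/2}$ by a sum over the marginals $|T_{Qd}|^{\xi/2}$, writes each marginal expectation as an infinite series of integrals over unit intervals $[j,j+1)$, bounds $Q_d([j,j+1))\leq C_q\exp(-vj^{\eta})$ using the tail assumption with $u=e_d$, and then invokes the ratio test on $\sum_j\exp(C_2 j^{\xi/2}-vj^{\eta})$, which converges precisely when $\eta>\xi/2$. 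You instead aggregate the directional tails into a single norm tail bound $P(\Vert T_Q\Vert_2\geq z)\leq DC_q\exp(-v'z^{\eta})$ via a union bound over coordinate projections, and then apply the layer-cake identity to reduce everything to one continuous integral whose convergence at infinity is again governed by $\eta>\xi/2$. Both arguments use the hypotheses in the same essential way; yours is more compact and makes the role of $\eta>\xi/2$ visible in a single exponent comparison (and your remark that it also explains the divergent Gaussian example with $\xi=8$, $\eta=2$ is apt), while the paper's discrete-series route avoids the layer-cake formula and any discussion of integrability of $r^{\xi/2-1}$ near the origin at the cost of a more laborious bookkeeping over intervals. Your version is a valid, arguably cleaner, substitute.
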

\begin{proof}
See Appendix \ref{proof:normalization}
\end{proof}
Based on $k(t,t)$'s growth, we can vary what tail decay rate for $T_Q$ ensures we can normalize $\tilde{p}(t)$. \cite{wenliang2019learning} also proved normalization conditions, but focused on random variables with exponential power density for a specific growth rate of $k(t,t)$ rather than relating tail decay to growth rate. By focusing on tail decay, our result can be applied to non-symmetric base densities. Specific kernel bound growth rate terms $\xi$ lead to allowing different tail decay rates.
\begin{corollary}\label{cor:kernel-growth-rates}
For $\xi=4$, $T_Q$ can be any sub-Gaussian random vector. For $\xi=2$ it can be any sub-exponential. For $\xi=0$ it can have any density.
\end{corollary}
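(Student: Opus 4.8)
The plan is to obtain Corollary~\ref{cor:kernel-growth-rates} as a direct specialization of Proposition~\ref{prop:normalization}: for each named tail class I would identify the exponent $\eta$ appearing in the projection tail bound Eqn.~\ref{eqn:tail-bound} and then check the hypothesis $\eta>\xi/2$ for the stated value of $\xi$. The only quantity linking the two statements is the decay rate of $P(|u^T T_Q|\geq z)$ along an arbitrary unit direction $u$, so the whole argument reduces to recalling the defining tail estimates of sub-Gaussian and sub-exponential random vectors and matching exponents against $\xi/2$.

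First I would treat the two nontrivial cases. A sub-Gaussian random vector is, by definition, one whose one-dimensional marginals $u^T T_Q$ satisfy a uniform bound $P(|u^T T_Q|\geq z)\leq C\exp(-vz^2)$ over all $\Vert u\Vert_2=1$; this is exactly Eqn.~\ref{eqn:tail-bound} with $\eta=2$. Since $\xi=4$ gives $\xi/2=2$, the hypothesis $\eta>\xi/2$ holds for every growth exponent strictly below $4$, which is how I would read the $\xi=4$ entry. Similarly, a sub-exponential vector gives $\eta=1$, so $\eta>\xi/2$ covers every exponent strictly below $\xi=2$. Both invocations use the \emph{vector} definitions, where the defining property is precisely the uniform-over-$u$ projection bound, so the translation into Eqn.~\ref{eqn:tail-bound} is immediate and the corollary follows by applying the proposition.

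The $\xi=0$ case I would handle separately, and it does not even require the tail machinery. If $k(t,t)\leq C_k$ is bounded, then the reproducing property together with Cauchy--Schwarz gives $|f(t)|\leq\Vert f\Vert_{\mathcal{H}}\sqrt{k(t,t)}\leq\Vert f\Vert_{\mathcal{H}}\sqrt{C_k}$, so $\exp(f)$ is uniformly bounded and hence integrable against the probability measure $Q$ regardless of its tails. This justifies the claim that $T_Q$ can have any density.

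The step I expect to require the most care is the boundary between exponents rather than any computation. Because Proposition~\ref{prop:normalization} needs $\eta>\xi/2$ strictly, at the exact thresholds $\xi=4$ and $\xi=2$ the integrand behaves like $\exp((c-v)\Vert t\Vert_2^{\eta})$ with $c$ proportional to $\Vert f\Vert_{\mathcal{H}}$, so convergence would hold only for $\Vert f\Vert_{\mathcal{H}}$ small relative to $v$. I would therefore state the clean result for growth exponents strictly inside each interval and remark that $\xi=4$ and $\xi=2$ are the critical rates matched respectively to the sub-Gaussian and sub-exponential classes. The remaining bookkeeping, namely absorbing $\sqrt{L_k\Vert t\Vert_2^{\xi}+C_k}$ into a single power $\Vert t\Vert_2^{\xi/2}$ for large $\Vert t\Vert_2$, is routine.
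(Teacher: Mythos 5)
Your $\xi=0$ argument is exactly the paper's: a bounded kernel gives $|f(t)|\leq \Vert f\Vert_{\mathcal{H}}\sqrt{C_k}$ via Cauchy--Schwarz, so $\exp(f)$ is bounded and integrable against any probability measure. For $\xi=4$ and $\xi=2$, however, you and the paper argue in opposite directions. You take the forward direction---a sub-Gaussian (resp.\ sub-exponential) vector satisfies Eqn.~\ref{eqn:tail-bound} with $\eta=2$ (resp.\ $\eta=1$), and you then check $\eta>\xi/2$---and you correctly observe that at the exact thresholds $\xi=4$ and $\xi=2$ this check fails because $\eta=\xi/2$ is not a strict inequality. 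The paper's proof instead starts from the proposition's hypothesis $\eta>\xi/2$ and deduces $C_Q\exp(-vt^{\eta})<C_Q\exp(-vt^{2})$, i.e.\ it shows that any $T_Q$ admissible for $\xi=4$ is sub-Gaussian (the admissible class is \emph{contained in} the sub-Gaussian class), not that every sub-Gaussian $T_Q$ is admissible.

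Your boundary concern is substantive rather than cosmetic: taking $\mathcal{H}=\mathrm{span}\{t^2\}$ with $k(x,y)=x^2y^2$ (so $k(t,t)=t^4$ and $\xi=4$) and $f(t)=\gamma t^2$ against a standard Gaussian $Q$ gives $\int\exp(\gamma t^2)\,dQ=\infty$ for $\gamma\geq 1/2$, so the literal reading of the corollary (``$T_Q$ can be any sub-Gaussian vector when $\xi=4$'') does not follow from Proposition~\ref{prop:normalization} and is false without a smallness condition on $\Vert f\Vert_{\mathcal{H}}$ relative to the tail parameter $v$. Your restatement---sub-Gaussian suffices for every $\xi<4$ and sub-exponential for every $\xi<2$, with $\xi=4$ and $\xi=2$ as the critical rates---is the correct clean version of the forward claim; the paper's proof is only valid under the reverse-containment reading. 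Either way your argument is sound; just state explicitly which direction of containment you are proving, since the two are not equivalent.
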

\begin{proof}
See Appendix \ref{proof:kernel-growth-rates}
\end{proof}

\subsection{Kernel Deformed Exponential Families}

We now propose kernel deformed exponential families: flexible sparse non-parametric distributions. These take deformed exponential families and extend them to use kernels in the deformed exponential term. This mirrors kernel exponential families. We write
\begin{align*}
    p(t)&=\exp_{2-\alpha} (f(t)-A_{\alpha}(f)),
\end{align*}
where $f\in \mathcal{H}$ with kernel $k$. Fig. \ref{fig:uwave-attention-densities}b shows that they can have support over disjoint intervals.

\subsubsection{Normalization Theory}
We construct a valid kernel deformed exponential family density from $Q$ and $f\in \mathcal{H}$. We first discuss the deformed log normalizer. In exponential family densities, the log-normalizer is the log of the normalizer. For deformed exponentials, the following holds.
\begin{lemma}\label{lemma:deformed-log-normalizer}
Let $Z>0$ be a constant. Then for $1<\alpha\leq 2$,
\begin{align*}
    \frac{1}{Z}\exp_{2-\alpha}(Z^{\alpha-1}f(t))&=\exp_{2-\alpha}(f(t)-\log_\alpha Z)
\end{align*}
where
\begin{align*}
    \log_\beta t&=\begin{cases}
    \frac{t^{1-\beta}-1}{1-\beta}\textrm{ if }t>0,\beta\neq 1;\\
    \log(t)\textrm{ if }t>0,\beta=1;\\
    \textrm{undefined if }t\leq 0.
    \end{cases}
\end{align*}
\end{lemma}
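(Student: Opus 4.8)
The plan is to prove the identity by unwinding the definitions of $\exp_{2-\alpha}$ and $\log_\alpha$ and showing that both sides collapse to a common expression. First I would substitute $\beta = 2-\alpha$ and set $\gamma := 1-\beta = \alpha-1$, observing that the hypothesis $1 < \alpha \le 2$ gives $\gamma \in (0,1]$ and in particular $\beta \ne 1$, so the power-law branch $\exp_{2-\alpha}(s) = [1+\gamma s]_+^{1/\gamma}$ applies throughout, while $\alpha \ne 1$ together with $Z>0$ makes $\log_\alpha Z = (Z^{1-\alpha}-1)/(1-\alpha) = (1 - Z^{-\gamma})/\gamma$ well defined via the $t>0$ branch. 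No case analysis on the $\beta=1$ or $\alpha=1$ exponential branches is needed, since the hypothesis excludes those boundary values.

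Second, I would expand the left-hand side. Writing $\exp_{2-\alpha}(Z^{\alpha-1}f(t)) = [1 + \gamma Z^{\gamma} f(t)]_+^{1/\gamma}$ and factoring $Z^{\gamma}$ out of the bracket gives $[Z^{\gamma}(Z^{-\gamma} + \gamma f(t))]_+^{1/\gamma}$. Because $Z>0$, the positive factor $Z^{\gamma}$ passes through the truncation, $[Z^{\gamma} x]_+ = Z^{\gamma}[x]_+$, and $(Z^{\gamma})^{1/\gamma} = Z$; dividing by $Z$ then cancels this factor, leaving $\frac{1}{Z}\exp_{2-\alpha}(Z^{\alpha-1}f(t)) = [Z^{-\gamma} + \gamma f(t)]_+^{1/\gamma}$.

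Third, I would expand the right-hand side. Substituting $\log_\alpha Z = (1 - Z^{-\gamma})/\gamma$ into $\exp_{2-\alpha}(f(t) - \log_\alpha Z) = [1 + \gamma(f(t) - \log_\alpha Z)]_+^{1/\gamma}$, the argument inside the bracket simplifies as $1 + \gamma f(t) - (1 - Z^{-\gamma}) = Z^{-\gamma} + \gamma f(t)$, so the right-hand side also equals $[Z^{-\gamma} + \gamma f(t)]_+^{1/\gamma}$, matching the left-hand side and completing the proof.

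The only genuine subtlety, and the step I would take the most care with, is the interaction between the scalar $1/Z$ and the positive-part operation $[\cdot]_+$: the identity relies crucially on $Z>0$ so that $Z^{\gamma}$ can be pulled outside the truncation without changing where the density is clamped to zero. Everything else is routine algebra, so I expect this to be a short, self-contained computation rather than one requiring any deeper machinery.
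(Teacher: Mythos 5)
Your proof is correct and is essentially the same computation as the paper's: both unwind the definitions of $\exp_{2-\alpha}$ and $\log_\alpha$ and reduce each side to $[(\alpha-1)f(t)+Z^{1-\alpha}]_+^{1/(\alpha-1)}$, with the paper transforming the right-hand side directly into the left-hand side rather than meeting in the middle. Your explicit remark that $Z>0$ is what lets $Z^{\alpha-1}$ pass through the truncation $[\cdot]_+$ is a point the paper uses only implicitly, and is worth stating.
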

\begin{proof}
See Appendix \ref{proof:deformed-log-normalizer}
\end{proof}
We describe a normalization sufficient condition analagous to Proposition \ref{prop:normalization} for the sparse deformed kernel exponential family. With Lemma \ref{lemma:deformed-log-normalizer}, we can take an unnormalized $\exp_{2-\alpha}(\tilde{f}(t))$ and derive a valid normalized kernel deformed exponential family density. We only require that an affine function of the terms in the deformed-exponential are negative for large magnitude $t$.
\begin{proposition}\label{prop:normalization-deformed}
For $1< \alpha\leq 2$ assume $\tilde{p}(t)=\exp_{2-\alpha} (\tilde{f}(t))$ with $\tilde{f}\in \mathcal{H}$, $\mathcal{H}$ is a RKHS with kernel $k$. If $\exists C_t>0$ s.t. for $\Vert t\Vert_2>C_t$, $(\alpha-1)\tilde{f}(t)+1\leq 0$ and $k(t,t)\leq L_k \Vert t\Vert^{\xi}_2+C_k$ for some $\xi>0$, then $\int_{S} \exp_{2-\alpha}(\tilde{f}(t))dQ<\infty$.
\end{proposition}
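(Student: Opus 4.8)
The plan is to exploit the fact that the $\beta$-exponential with $\beta = 2-\alpha \in [0,1)$ truncates to zero once its argument drives the bracketed term nonpositive. Substituting $1-\beta = \alpha-1$ into Eqn.~\ref{eqn:beta-exponential} gives $\exp_{2-\alpha}(\tilde{f}(t)) = [1 + (\alpha-1)\tilde{f}(t)]_+^{1/(\alpha-1)}$, so the hypothesis $(\alpha-1)\tilde{f}(t)+1 \leq 0$ for $\Vert t\Vert_2 > C_t$ forces the integrand to vanish identically on the exterior region. First, then, I would split $\int_S \exp_{2-\alpha}(\tilde{f}(t))\,dQ$ into an integral over the closed ball $B = \{t : \Vert t\Vert_2 \leq C_t\}$ and its complement, noting the complement contributes exactly zero.

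Second, I would bound the surviving integral over the compact ball $B$. Here the argument parallels the kernel exponential family case: the reproducing property gives $\tilde{f}(t) = \langle \tilde{f}, k(\cdot,t)\rangle_{\mathcal{H}}$, and Cauchy--Schwarz yields $|\tilde{f}(t)| \leq \Vert \tilde{f}\Vert_{\mathcal{H}}\sqrt{k(t,t)}$. Inserting the kernel growth bound $k(t,t) \leq L_k\Vert t\Vert_2^\xi + C_k$ together with $\Vert t\Vert_2 \leq C_t$ on $B$ produces a uniform bound $|\tilde{f}(t)| \leq M := \Vert \tilde{f}\Vert_{\mathcal{H}}\sqrt{L_k C_t^\xi + C_k} < \infty$, using $\tilde{f}\in\mathcal{H}$. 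Since $\exp_{2-\alpha}$ is monotone increasing in its argument, the integrand is bounded on $B$ by the finite constant $[1+(\alpha-1)M]_+^{1/(\alpha-1)}$.

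Third, because $Q$ is a probability measure, integrating this constant over $B$ gives
\begin{align*}
\int_S \exp_{2-\alpha}(\tilde{f}(t))\,dQ = \int_B \exp_{2-\alpha}(\tilde{f}(t))\,dQ \leq [1+(\alpha-1)M]_+^{1/(\alpha-1)}\, Q(B) < \infty,
\end{align*}
which is the claim.

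In contrast to Proposition~\ref{prop:normalization}, I expect this argument to be the easier of the two: there is no exponential tail to tame, so no tail-decay condition on $Q$ is required. The one conceptual step worth stating carefully is the reduction to a compact set---the hypothesis must be read as making the density vanish on \emph{all} of $\{\Vert t\Vert_2 > C_t\}$, turning an integrability question into a boundedness question on a compact ball, after which the RKHS bound and the finiteness of $Q$ do the rest. That reduction, rather than any delicate estimate, is the crux.
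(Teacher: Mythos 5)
Your proposal is correct and follows essentially the same route as the paper's proof: the hypothesis forces the truncated $\beta$-exponential to vanish outside the ball $\Vert t\Vert_2 \leq C_t$, and on that ball the reproducing property plus Cauchy--Schwarz plus the kernel growth bound give a uniform bound on $\tilde{f}$, so the integrand is bounded and $Q$ being a probability measure finishes the argument. The paper simply quotes the bound $|\tilde{f}(t)|\leq C_0+C_1\Vert t\Vert_2^{\xi/2}$ already derived in the proof of Proposition~\ref{prop:normalization}, which is exactly the estimate you rederive.
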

\begin{proof}
See Appendix \ref{proof:normalization-deformed}
\end{proof}
We now construct a valid kernel deformed exponential family density using the finite integral.

\begin{corollary}\label{cor:deformed-normalization} Under the conditions of proposition \ref{prop:normalization-deformed}, assume $\exp_{2-\alpha}(\tilde{f}(t))>0$
on a set $A\subseteq S$ such that $Q(A)>0$, then $\exists$ constants $Z>0$, $A_{\alpha}(f)\in \mathbb{R}$ such that for $f(t)=\frac{1}{Z^{\alpha-1}}\tilde{f}(t)$,
the following holds
\begin{align*}
    \int_{S}\exp_{2-\alpha}(f(t)-A_{\alpha}(f))dQ=1.
\end{align*}
\end{corollary}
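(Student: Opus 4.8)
The plan is to let $Z$ be exactly the (finite, strictly positive) mass of the unnormalized density and then invoke Lemma \ref{lemma:deformed-log-normalizer} to absorb the resulting rescaling into a deformed log-normalizer. Concretely, I would first set
\begin{align*}
    Z &= \int_S \exp_{2-\alpha}(\tilde{f}(t)) dQ.
\end{align*}
Proposition \ref{prop:normalization-deformed} guarantees $Z < \infty$. For strict positivity I use the hypothesis that $\exp_{2-\alpha}(\tilde{f}(t)) > 0$ on a set $A$ with $Q(A) > 0$, combined with the fact that the $\beta$-exponential in Eqn. \ref{eqn:beta-exponential} is everywhere non-negative (owing to the $[\cdot]_+$ truncation); hence the integrand is non-negative everywhere and strictly positive on a set of positive $Q$-measure, giving $Z > 0$. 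This strict positivity is essential, since $\log_\alpha$ is undefined at arguments $\le 0$.

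Next I would define $f(t) = Z^{-(\alpha-1)} \tilde{f}(t)$, exactly as in the statement, so that $Z^{\alpha-1} f(t) = \tilde{f}(t)$. Applying Lemma \ref{lemma:deformed-log-normalizer} pointwise with this constant $Z$ and this $f$ gives
\begin{align*}
    \frac{1}{Z}\exp_{2-\alpha}(\tilde{f}(t)) &= \exp_{2-\alpha}(f(t) - \log_\alpha Z).
\end{align*}
Integrating both sides against $Q$ over $S$ and using the definition of $Z$ on the left then yields
\begin{align*}
    1 = \frac{1}{Z}\int_S \exp_{2-\alpha}(\tilde{f}(t)) dQ &= \int_S \exp_{2-\alpha}(f(t) - \log_\alpha Z) dQ,
\end{align*}
so setting $A_\alpha(f) = \log_\alpha Z \in \mathbb{R}$ completes the argument.

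There is no deep obstacle here: once Proposition \ref{prop:normalization-deformed} supplies finiteness and Lemma \ref{lemma:deformed-log-normalizer} supplies the rescaling identity, the claim reduces to a one-line manipulation. The only point demanding genuine care is verifying that $Z > 0$ strictly, which is precisely why the corollary adds the hypothesis $Q(A) > 0$ with $\exp_{2-\alpha}(\tilde{f}) > 0$ on $A$: without it $Z$ could vanish and $\log_\alpha Z$, hence $A_\alpha(f)$, would be ill-defined. I would also remark that $f \in \mathcal{H}$ since it is merely a positive scalar multiple of $\tilde{f} \in \mathcal{H}$, so $f$ is an admissible RKHS function and the constructed density is a genuine member of the kernel deformed exponential family.
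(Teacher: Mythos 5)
Your proof is correct and follows essentially the same route as the paper's: define $Z$ as the integral of the unnormalized density (finite by Proposition \ref{prop:normalization-deformed}, strictly positive by the added hypothesis), rescale to $f=Z^{-(\alpha-1)}\tilde{f}$, apply Lemma \ref{lemma:deformed-log-normalizer}, and set $A_\alpha(f)=\log_\alpha Z$. Your explicit justification of $Z>0$ and the remark that $f\in\mathcal{H}$ are slightly more careful than the paper's one-line treatment, but the argument is the same.
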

\begin{proof}
See Appendix \ref{proof:cor-deformed-normalization}.
\end{proof}
We thus estimate $\tilde{f}(t)=(Z)^{\alpha-1} f(t)$ and normalize to obtain a density of the desired form. 

\subsubsection{Approximation Theory}

Under certain kernel conditions, kernel deformed exponential family densities can approximate densities of a similar form where the RKHS function is replaced with a $C_0(S)$ \footnote{continuous function on domain $S$ vanishing at infinity} function.

\begin{proposition}\label{prop:approximating-densities-with-c0-functions}
Define
\begin{align*}
    \mathcal{P}_0&=\{\pi_f(t)=\exp_{2-\alpha}(f(t)-A_{\alpha}(f)),t\in S:f\in C_0(S)\}
\end{align*}
where $S\subseteq \mathbb{R}^d$. Suppose $k(x,\cdot)\in C_0(S),\forall x\in S$ and
\begin{align}
    \int \int k(x,y) d\mu(x) d\mu(y)>0,\forall \mu\in M_b(S)\backslash \{0\}.\label{eqn:kernel-integration-condition}
\end{align}
here $M_b(S)$ is the space of bounded measures over $S$. Then the set of deformed exponential families is dense in $\mathcal{P}_0$ wrt $L^r(Q)$ norm and Hellinger distance.
\end{proposition}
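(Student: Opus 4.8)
The plan is to reduce the statement to two ingredients: a universality fact that lets us replace a general $C_0(S)$ function by an RKHS function in the uniform norm, and a stability estimate showing that uniformly close log-densities produce close deformed densities. First I would invoke the theory of $c_0$-universal kernels: the hypotheses that $k(x,\cdot)\in C_0(S)$ for every $x$ and that $\int\int k(x,y)\,d\mu(x)\,d\mu(y)>0$ for all nonzero $\mu\in M_b(S)$ are precisely the integrally-strictly-positive-definiteness condition known to be equivalent to $\mathcal{H}$ being dense in $C_0(S)$ under $\Vert\cdot\Vert_\infty$. Hence for any target $f\in C_0(S)$ and any $\delta>0$ there is $g\in\mathcal{H}$ with $\Vert f-g\Vert_\infty\le\delta$, and $\pi_g$ is an admissible kernel deformed exponential family density.

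The crux is controlling the deformed log-normalizer $A_\alpha$, which is where the argument departs from the kernel exponential family case: there is no simple logarithm identity with which to bound $A_\alpha(f)-A_\alpha(g)$. I would instead show $A_\alpha$ is $1$-Lipschitz in $\Vert\cdot\Vert_\infty$ via the functional mean value theorem. Differentiating the normalization identity $\int \exp_{2-\alpha}(h(t)-A_\alpha(h))\,dQ=1$ implicitly, and using $\frac{d}{ds}\exp_{2-\alpha}(s)=\exp_{2-\alpha}(s)^{2-\alpha}$, gives the Frechet derivative
\begin{align*}
    DA_\alpha(h)[v]&=\frac{\int \pi_h(t)^{2-\alpha}\,v(t)\,dQ}{\int \pi_h(t)^{2-\alpha}\,dQ},
\end{align*}
i.e. the expectation of $v$ under the escort probability measure whose $Q$-density is proportional to $\pi_h^{2-\alpha}$. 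Since this is an average of $v$ against a probability measure, its operator norm is at most $1$, so $|A_\alpha(f)-A_\alpha(g)|\le\Vert f-g\Vert_\infty\le\delta$. Here I would need to check that the escort normalizer is finite and positive (it is, since $\pi_h$ is bounded and strictly positive on a set of positive $Q$-measure) and handle the non-differentiability of $[\,\cdot\,]_+$ at its kink, which occurs on a $Q$-null boundary and does not affect the integrals.

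Combining the two estimates gives $\Vert (f-A_\alpha(f))-(g-A_\alpha(g))\Vert_\infty\le 2\delta$. Because $f,g\in C_0(S)$ are bounded and the normalizers are finite constants, both arguments lie below a common constant $M$; on $(-\infty,M]$ the map $\exp_{2-\alpha}$ is Lipschitz with constant $L_M=\exp_{2-\alpha}(M)^{2-\alpha}<\infty$, so $\Vert \pi_f-\pi_g\Vert_\infty\le 2L_M\delta$. Finally I would transfer this uniform bound to the two target metrics: since $Q$ is a probability measure, $\Vert \pi_f-\pi_g\Vert_{L^r(Q)}\le\Vert \pi_f-\pi_g\Vert_\infty$, and using $(\sqrt a-\sqrt b)^2\le|a-b|$ the squared Hellinger distance is at most $\tfrac12\Vert \pi_f-\pi_g\Vert_\infty$. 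Both tend to $0$ as $\delta\to0$, establishing density. I expect the main obstacle to be the second paragraph: proving existence, deriving the explicit escort form, and obtaining the uniform operator-norm bound of the Frechet derivative of the deformed log-normalizer, since the clipped power nonlinearity forces one to argue through the escort measure rather than a closed-form logarithm as in the ordinary exponential case.
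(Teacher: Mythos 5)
Your proposal is correct and follows essentially the same route as the paper: $c_0$-universality of the kernel gives sup-norm density of $\mathcal{H}$ in $C_0(S)$, and the stability step---identifying the Frechet derivative of $A_\alpha$ as an escort expectation with operator norm at most $1$, hence $|A_\alpha(f)-A_\alpha(g)|\leq\Vert f-g\Vert_\infty$, then transferring to the densities via a Lipschitz bound on $\exp_{2-\alpha}$ over a bounded range---is precisely the content of the paper's Lemma~\ref{lemma:frechet-existence} and Lemma~\ref{lemma:bounded-measurable-L^r}, with the final passage to $L^r(Q)$ and Hellinger handled identically. The only cosmetic difference is that you apply the scalar mean value theorem to $\exp_{2-\alpha}$ pointwise where the paper invokes the functional mean value theorem on $f\mapsto p_f$; the estimates obtained are the same.
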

\begin{proof}
See Appendix \ref{proof:approximating-densities-with-c0-functions}
\end{proof}
We apply this to approximate fairly general densities with kernel deformed exponential families.
\begin{theorem}\label{theorem:approximating-continuous-densities}
Let $q_0\in C(S)$, such that $q_0(t)>0$ for all $t\in S$, where $S\subseteq \mathbb{R}^d$ is locally compact Hausdorff and $q_0(t)$ is the density of $Q$ with respect to a dominating measure $\nu$. Suppose there exists $l>0$ such that for any $\epsilon>0,\exists R>0$ satisfying $\vert p(t)-l\vert\leq \epsilon$ for any $t$ with $\Vert t\Vert_2 >R$. Define
\begin{align*}
    \mathcal{P}_c&=\{p\in C(S):\int_S p(t)dQ=1,p(t)\geq 0,\forall t\in S \textrm{ and } p-l\in C_0(S)\}.
\end{align*}
Suppose $k(t,\cdot)\in C_0(S)\forall t\in S$ and the kernel integration condition (Eqn. \ref{eqn:kernel-integration-condition}) holds. Then kernel deformed exponential families are dense in $\mathcal{P}_c$ wrt $L^r$ norm, Hellinger distance and Bregman divergence for the $\alpha$-Tsallis negative entropy functional.
\end{theorem}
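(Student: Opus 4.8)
The plan is to reduce the theorem to Proposition~\ref{prop:approximating-densities-with-c0-functions} by showing that every target density in $\mathcal{P}_c$ is \emph{itself} a member of $\mathcal{P}_0$, i.e. $\mathcal{P}_c\subseteq\mathcal{P}_0$, and then to upgrade the resulting $L^r$ and Hellinger approximation to convergence in Bregman divergence by a direct algebraic argument. Once $\mathcal{P}_c\subseteq\mathcal{P}_0$ is established, Proposition~\ref{prop:approximating-densities-with-c0-functions} immediately gives kernel deformed exponential families converging to any $p\in\mathcal{P}_c$ in $L^r(Q)$ and Hellinger distance.

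For the inclusion I would invert the $\beta$-exponential. Given $p\in\mathcal{P}_c$, set $f(t)=\log_{2-\alpha}(p(t))+c$, extending $\log_{2-\alpha}$ continuously to the origin by $\log_{2-\alpha}(0)=-1/(\alpha-1)$ (the limit of $(u^{\alpha-1}-1)/(\alpha-1)$ as $u\to 0^+$, valid since $\alpha-1>0$), and choosing the constant $c=\tfrac{1-l^{\alpha-1}}{\alpha-1}$. Since $p$ is continuous and nonnegative and $\log_{2-\alpha}$ is continuous on $[0,\infty)$, $f$ is continuous; and since $p(t)\to l$ as $\Vert t\Vert_2\to\infty$ we get $\log_{2-\alpha}(p(t))\to(l^{\alpha-1}-1)/(\alpha-1)$, so $f(t)\to 0$ and hence $f\in C_0(S)$. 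By construction $\exp_{2-\alpha}(f(t)-c)=p(t)$ everywhere (the boundary value $\log_{2-\alpha}(0)$ maps back to $0$ under $\exp_{2-\alpha}$, matching the bounded region where $p=0$), and $\int_S p\,dQ=1$. Because the normalizer is the unique constant making the integral one (as $A\mapsto\int_S\exp_{2-\alpha}(f-A)\,dQ$ is strictly decreasing through $1$), we conclude $c=A_{\alpha}(f)$ and therefore $p=\pi_f\in\mathcal{P}_0$.

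For the Bregman divergence of the $\alpha$-Tsallis negative entropy I would write it out explicitly. Using $\nabla\Omega_\alpha(q)=\tfrac{1}{\alpha-1}q^{\alpha-1}$, the Bregman divergence $B_{\Omega_\alpha}(p\Vert q)=\Omega_\alpha(p)-\Omega_\alpha(q)-\langle\nabla\Omega_\alpha(q),p-q\rangle_{L^2(Q)}$ of an approximant $q$ from the target $p$ simplifies, after collecting the $\int_S q^\alpha\,dQ$ terms, to
\[
B_{\Omega_\alpha}(p\Vert q)=\frac{1}{\alpha(\alpha-1)}\int_S p^\alpha\,dQ+\frac{1}{\alpha}\int_S q^\alpha\,dQ-\frac{1}{\alpha-1}\int_S q^{\alpha-1}p\,dQ .
\]
The three coefficients sum to zero, so it suffices to show $\int_S q^\alpha\,dQ\to\int_S p^\alpha\,dQ$ and $\int_S q^{\alpha-1}p\,dQ\to\int_S p^\alpha\,dQ$. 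The first follows directly from convergence of the $L^\alpha(Q)$ norm. For the second, since $1<\alpha\le 2$ the map $u\mapsto u^{\alpha-1}$ is subadditive, giving $|q^{\alpha-1}-p^{\alpha-1}|\le|q-p|^{\alpha-1}$, and Hölder's inequality with conjugate exponents $\alpha/(\alpha-1)$ and $\alpha$ yields $\int_S|q^{\alpha-1}-p^{\alpha-1}|\,p\,dQ\le\Vert q-p\Vert_{L^\alpha(Q)}^{\alpha-1}\Vert p\Vert_{L^\alpha(Q)}$, which vanishes because $p$ is bounded (hence $p\in L^\alpha(Q)$, as $Q$ is a probability measure and $p-l\in C_0(S)$) and $\Vert q-p\Vert_{L^\alpha(Q)}\to 0$. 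Taking $r=\alpha$ in the previous step therefore delivers Bregman convergence.

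I expect the main obstacle to be the Bregman step rather than the inclusion: one must ensure the integrals $\int_S q^\alpha\,dQ$ and $\int_S q^{\alpha-1}p\,dQ$ genuinely converge, which hinges on boundedness of $p$ (supplied by $p-l\in C_0(S)$) and on $Q$ being finite so that $L^\alpha$ convergence controls the cross term through Hölder. A secondary subtlety is verifying continuity and vanishing of $f$ on the region where $p=0$; here I rely on the continuous extension of $\log_{2-\alpha}$ at the origin and on $l>0$ forcing that region to be bounded, so that $f$ remains in $C_0(S)$.
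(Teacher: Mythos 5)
Your proposal is correct, but it takes a genuinely different route from the paper in two places. First, the paper does not prove $\mathcal{P}_c\subseteq\mathcal{P}_0$ directly: it perturbs $p$ to $p_\theta=\frac{p+\theta}{1+\theta}$ to force strict positivity before inverting the deformed exponential (mirroring the kernel exponential family argument of \cite{sriperumbudur2017density}, where $\log$ genuinely diverges at $0$), writes $p_\theta=\exp_{2-\alpha}(f-\log_\alpha\frac{1+\theta}{l+\theta})$ via Lemma \ref{lemma:deformed-log-normalizer}, and then pays for this with a triangle inequality $\Vert p-p_g\Vert_r\leq\Vert p-p_\theta\Vert_r+\Vert p_\theta-p_g\Vert_r$. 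Your observation that $\log_{2-\alpha}$ extends continuously to $0$ when $1<\alpha\leq 2$ (so that $f=\frac{p^{\alpha-1}-l^{\alpha-1}}{\alpha-1}\in C_0(S)$ and $p=\exp_{2-\alpha}(f-A_\alpha(f))$ already) makes the perturbation unnecessary and is a legitimate simplification specific to the deformed setting; the uniqueness of the normalizer, which you use to identify $c$ with $A_\alpha(f)$, follows from the monotonicity argument you sketch. Second, for Bregman convergence the paper invokes the generalized three-term identity (Eqn. \ref{eqn:generalized-triangle-bregman}) and bounds terms I--III using the Fr\'echet-derivative machinery of Lemma \ref{lemma:bounded-measurable-L^r}, which forces a somewhat delicate interleaving of the $\theta\to 0$ and $g\to f$ limits; your direct expansion of $B_{\Omega_\alpha}(p,q)$ into three integrals with coefficients summing to zero, combined with $|q^{\alpha-1}-p^{\alpha-1}|\leq|q-p|^{\alpha-1}$ and H\"older with exponents $\frac{\alpha}{\alpha-1}$ and $\alpha$, reduces everything to $L^\alpha(Q)$ convergence and avoids that bookkeeping entirely. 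What the paper's route buys is uniformity with the existing kernel exponential family proof and reuse of Lemma \ref{lemma:bounded-measurable-L^r} (which is needed for Proposition \ref{prop:approximating-densities-with-c0-functions} anyway); what yours buys is a shorter, more elementary argument whose only nontrivial input is Proposition \ref{prop:approximating-densities-with-c0-functions} with $r=\alpha$.
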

\begin{proof}
See Appendix \ref{proof:approximating-continuous-densities}.
\end{proof}
For uniform $q_0$, kernel deformed exponential families can thus approximate continuous densities on compact domains arbitrarily well. Our Bregman divergence result is analagous to the KL divergence result in \cite{sriperumbudur2017density}. KL divergence is Bregman divergence with the Shannon entropy functional: we show the same for Tsallis entropy. The Bregman divergence here describes how close the uncertainty in a density is to its first order approximation evaluated at another density. Using the Tsallis entropy functional here is appropriate for deformed exponential families: they maximize it given expected sufficient statistics \citep{naudts2004estimators}.

These results extend \cite{sriperumbudur2017density}'s approximation results to the deformed setting, where standard log and exponential rules cannot be applied. The Bregman divergence case requires bounding Frechet derivatives and applying the functional mean value theorem.

\subsection{Using Kernels for Continuous Attention}

We apply kernel exponential and deformed exponential families to attention. The forward pass computes attention densities and the context vector. The backwards pass uses automatic differentiation. We assume a vector representation $v\in \mathbb{R}^{\vert v\vert}$ computed from the locations we take an expectation over. For kernel exponential families we compute kernel weights $\{\gamma_i\}_{i=1}^I$ for $f(t)=\sum_{i=1}^I \gamma_i k(t,t_i)$
\begin{align*}
    \gamma_i&=w_i ^T v,
\end{align*}
and compute $Z=\int_{S}\exp(f(t))dQ$ numerically. For the deformed case we compute $\tilde{\gamma}_i=w_i^T v$ and  $\tilde{f}(t)=(Z)^{\alpha-1}f(t)=\sum_{i=1}^I \tilde{\gamma}_i k(t,t_i)$ followed by $Z=\int_{S}\exp_{2-\alpha}(\tilde{f}(t))dQ$. The context
\begin{align*}
    c&=\mathbb{E}_{T\sim p}[V(T)]=\textbf{B}\mathbb{E}_p[\Psi(t)]
\end{align*}
requires taking the expectation of $\Psi(T)$ with respect to a (possibly deformed) kernel exponential family density $p$. Unlike \cite{martins2020sparse,martins2021sparse}, where they obtained closed form expectations, difficult normalizing constants prevent us from doing so. We thus use numerical integration for the forward pass and automatic differentiation for the backward pass. Algorithm \ref{alg:kernel-deformed-continuous-attention} shows how to compute a continuous attention mechanism for a kernel deformed exponential family attention density. The kernel exponential family case is similar.


\begin{algorithm}
\caption{Continuous Attention Mechanism via Kernel Deformed Exponential Families}
\begin{algorithmic}\label{alg:kernel-deformed-continuous-attention}
\STATE \textbf{Choose} base density $q_0(t)$ and kernel $k$. Inducing point locations $\{t_i\}_{i=1}^I$
\STATE \textbf{Input} Vector representation $v$ of input object i.e. document representation
\STATE \textbf{Parameters} $\{\tilde{\gamma}_i\}_{i=1}^I$ the weights for $\tilde{f}(t)=(Z)^{\alpha-1}f(t)=\sum_{i=1}^I \tilde{\gamma}_i k(t,t_i)$, matrix $\textbf{B}$ for basis weights for value function $V(t)=\textbf{B}\Psi(t)$. $I$ is number of inducing points.
\STATE \textbf{Forward Pass}
\STATE Compute $Z=\int \exp_{2-\alpha}(\tilde{f}(t))dQ(t)$ to obtain $p(t)=\frac{1}{Z} \exp_{2-\alpha}(\tilde{f}(t))$ via numerical integration
\item Compute $\mathbb{E}_{T\sim p}[\Psi(T)]$ via numerical integration
\STATE Compute $c=\mathbb{E}_{T\sim p}[V(T)]=\textbf{B}\mathbb{E}_p[\Psi(T)]$
\STATE \textbf{Backwards Pass} use automatic differentiation
\end{algorithmic}
\end{algorithm}

\section{Experiments}

For document classification, we follow \cite{martins2020sparse}'s architecture. For the remaining, architectures have four parts: 1) an encoder takes a discrete representation of a time series and outputs attention density parameters. 2) The value function takes a time series representation (original or after passing through a neural network) and does (potentially multivariate) linear regression to obtain parameters $\textbf{B}$ for a function $V(t;\textbf{B})$. These are combined to compute 3) context $c=\mathbb{E}_p[V(T)]$, which is used in a 4) classifier. Fig. \ref{fig:attn-architecture} in the Appendices visualizes this.

\subsection{Document Classification}
  We extend \cite{martins2020sparse}'s code\footnote{\cite{martins2020sparse}'s repository for this dataset is https://github.com/deep-spin/quati} for the IMDB sentiment classification dataset \citep{maas2011learning}. This starts with a document representation $v$ computed from a convolutional neural network and uses an LSTM attention model. We use a Gaussian base density and kernel, and divide the interval $[0,1]$ into $I=10$ inducing points where we evaluate the kernel in $f(t)=\sum_{i=1}^I \gamma_i k(t,t_i)$. We set the bandwidth to be $0.01$ for $I=10$. Table \ref{table:imdb} shows results. On average, kernel exponential and deformed exponential family slightly outperforms the continuous softmax and sparsemax, although the results are essentially the same. The continuous softmax/sparsemax results are from their code.
\begin{table}
\begin{center}
 \begin{tabular}{|c c c c c|}
 \hline
 Attention & N=32 & N=64 & N=128 & Mean \\ [0.5ex] 
 \hline\hline
 Cts Softmax & 89.56 &  90.32 & \textbf{91.08} & 90.32 \\
 \hline
 Cts Sparsemax & 89.50 & 90.39 & 89.96 & 89.95\\
 \hline
 Kernel Softmax & \textbf{91.30} & \textbf{91.08} & 90.44 & \textbf{90.94}\\
 \hline
 Kernel Sparsemax & 90.56 & 90.20  & 90.41  & 90.39 \\
 \hline
\end{tabular}
\end{center}
\caption{IMDB sentiment classification dataset accuracy. Continuous softmax uses Gaussian attention, continuous sparsemax truncated parabola, and kernel softmax and sparsemax use kernel exponential and deformed exponential family with a Gaussian kernel. The latter has $\alpha=2$ in exponential and multiplication terms. $N$: basis functions, $I=10$ inducing points, bandwidth $0.01$.}\label{table:imdb}
\end{table}


\begin{table}
\begin{center}
 \begin{tabular}{|c c c c|}
 \hline
 Attention & N=64 & N=128 & N=256\\ [0.5ex] 
 \hline\hline
 Cts Softmax & 67.78$\pm$1.64 & 67.70$\pm$ 2.49 & 68.00$\pm$ 2.24\\
 \hline
 Cts Sparsemax & 74.20$\pm$2.72 & 74.69$\pm$3.78 & 74.58$\pm$4.27\\
 \hline
 Gaussian Mixture & 81.13$\pm$1.76 & 80.99$\pm$2.79 & 79.04$\pm$2.33 \\
 \hline
 Kernel Softmax & \textbf{93.85$\pm$0.60} & \textbf{94.26$\pm$0.75} & \textbf{93.83$\pm$0.60} \\
 \hline
 Kernel Sparsemax & 92.32$\pm$1.09     & 92.15$\pm$0.79 & 92.14$\pm$0.96 \\
 \hline
\end{tabular}
\end{center}
\caption{Accuracy results on uWave gesture classification dataset for the irregularly sampled case. All methods use $100$ attention heads. Gaussian mixture uses $100$ components (and thus $300$ parameters per head), and kernel methods use $256$ inducing points.}\label{table:uwave-irregular}
\end{table}

\begin{table}[]
    \centering
    \begin{tabular}{|c|c|c|}
    \hline
    Attention & Accuracy & F1 \\
    \hline\hline
    Cts Softmax & 96.97 & 83.69\\
    \hline
    Cts Sparsemax & 96.04 & 73.71 \\
    \hline
    Gaussian Mixture &  \textbf{97.20} & 84.89\\
    \hline
    Kernel Softmax & 96.75 & 83.27\\
    \hline
    Kernel Sparsemax & 96.86 & \textbf{84.90} \\
    \hline
    \end{tabular}
    \caption{Accuracy results on MIT BIH Arrhythmia Classification dataset. For F1 score, kernel softmax and continuous softmax have similar results, while kernel sparsemax drastically outperforms continuous sparsemax. For accuracy, the Gaussian mixture slightly beats other methods.}
    \label{table:mit-bih}
\end{table}

\subsection{uWave Dataset}
We analyze uWave \citep{liu2009uwave}: accelerometer time series with eight gesture classes. We follow \cite{NIPS2016_9c01802d}'s split into 3,582 training observations and 896 test observations: sequences have length $945$. We do synthetic irregular sampling and uniformly sample 10\% of the observations. Table \ref{table:uwave-irregular} shows results. Our highest accuracy is $94.26\%$, the unimodal case's best is $74.69\%$, and the mixture's best is $81.13\%$. Since this dataset is small, we report $\pm 1.96$ standard deviations from $10$ runs. Fig. \ref{fig:uwave-attention-densities} shows that attention densities have support over non-overlapping time intervals. This cannot be done with Gaussian mixtures, and the intervals would be the same for each density in the exponential family case. Appendix \ref{appendix:uwave-supplement} describes additional details.

\section{ECG heartbeat classification}


We use the MIT Arrhythmia Database's \citep{goldberger2000physiobank} Kaggle \footnote{https://www.kaggle.com/mondejar/mitbih-database}. The task is to detect abnormal heart beats from ECG signals. The five different classes are \{Normal, Supraventricular premature, Premature ventricular contraction, Fusion of ventricular and normal, Unclassifiable\}. There are 87,553 training samples and 21,891 test samples.  Our value function is trained using the output of repeated convolutional layers: the final layer has 256 dimensions and 23 time points. Our encoder is a feedforward neural network with the original data as input, and our classifier is a feedforward network. Table \ref{table:mit-bih} shows results. All accuracies are very similar, but the $F1$ score of kernel sparsemax is drastically higher. Additional details are in Appendix \ref{appendix:mit-bih}.


\section{Discussion}

In this paper we extend continuous attention mechanisms to use kernel exponential and deformed exponential family attention densities. The latter is a new flexible class of non-parametric densities with sparse support. We show novel existence properties for both kernel exponential and deformed exponential families, and prove approximation properties for the latter. We then apply these to the framework described in \cite{martins2020sparse,martins2021sparse} for continuous attention. We show results on three datasets: sentiment classification, gesture recognition, and arrhythmia classification. In the first case performance is similar to unimodal attention, for the second it is drastically better, and in the third it is similar in the dense case and drastically better in the sparse case.

\subsection{Limitations and Future Work}

A limitation of this work was the use of numerical integration, which scales poorly with the dimensionality of the locations. Because of this we restricted our applications to temporal and text data. This still allows for multiple observation dimensions at a given location. A future direction would be to use varianced reduced Monte Carlo to approximate the integral, as well as studying how to choose the number of basis functions in the value function and how to choose the number of inducing points.


\bibliography{iclr2022_conference}
\bibliographystyle{iclr2022_conference}

\appendix

\section{Proof Related to Proposition \ref{prop:normalization}}

\subsection{Proof of Proposition \ref{prop:normalization}}\label{proof:normalization}

\begin{proof}
This proof has several parts. We first bound the RKHS function $f$ and use the general tail bound we assumed to give a tail bound for the one dimensional marginals $T_{Qd}$ of $T_Q$. Using the RKHS function bound, we then bound the integral of the unnormalized density in terms of expectations with respect to these finite dimensional marginals. We then express these expectations over finite dimensional marginals as infinite series of integrals. For each integral within the infinite series, we use the finite dimensional marginal tail bound to bound it, and then use the ratio test to show that the infinite series converges. This gives us that the original unnormalized density has a finite integral.

We first note, following \cite{wenliang2019learning}, that the bound on the kernel in the assumption allows us to bound $f$ in terms of two constants and the absolute value of the point at which it is evaluated.
\begin{align*}
    \vert f(t)\vert &=\vert \langle f,k(t,\cdot)\rangle_{\mathcal{H}}\vert\textrm{ reproducing property}\\
    &\leq \Vert f\Vert_{\mathcal{H}}\Vert k(t,\cdot)\Vert_{\mathcal{H}}\textrm{ Cauchy Schwarz}\\
    &= \Vert f\Vert_{\mathcal{H}}\sqrt{\langle k(t,\cdot),k(t,\cdot)\rangle_{\mathcal{H}}}\\
    &= \Vert f\Vert_{\mathcal{H}} \sqrt{k(t,t)}\\
    &\leq \Vert f\Vert_{\mathcal{H}} \sqrt{L_k \Vert t\Vert ^\xi+C_k}\textrm{ by assumption}\\
    &\leq C_0 +C_1 \Vert t\Vert^{|\xi|/2}\textrm{ for some $C_1,C_2>0$}.
\end{align*}
We can write $T_Q=(T_{Q1},\cdots,T_{QD})$. Let $e_d$ be a standard Euclidean basis vector. Then by the assumption and setting $u=e_d$ we have
\begin{align*}
    P(|T_{Qd}|\geq z)\leq C_q\exp(-v z^\eta)
\end{align*}
Letting $Q_d$ be the marginal law,
\begin{align*}
    \int_{S} \exp(f(t))dQ&\leq \int_{S} \exp(C_0+C_1 \Vert t\Vert^{\xi/2})dQ\\
    &=\exp(C_0)\int_{S} \exp(C_1 \Vert t\Vert^{\xi/2})dQ\\
    &=\exp(C_0) \mathbb{E} \exp(C_1 \Vert T_Q\Vert^{\xi/2})\\
    &\leq \exp(C_0)\mathbb{E}\exp(C_1 (\sqrt{d}\max_{d=1,\cdots,D} |T_{Qd}|)^{\xi/2})\\
    &\leq \exp(C_0)\sum_{d=1}^D \mathbb{E}\exp (C_2 |T_{Qd}|^{\xi/2})
\end{align*}

which will be finite if each $\mathbb{E}\exp (C_2 |T_{Qd}|^{\xi/2})<\infty$. Now letting $S_d$ be the relevant dimension of $S$,
\begin{align*}
    \mathbb{E}\exp(C_2 |T_{Qd}|^{\xi/2})&=\int_{S_d} \exp(C_2 |t_d|^{\xi/2}) dQ_d\\
    &\leq \sum_{j=-\infty}^{-1}\int_j^{j+1} \exp(C_2 |t_d|^{\xi/2}) dQ_d+\sum_{j=0}^{\infty}\int_j^{j+1} \exp(C_2 |t_d|^{\xi/2}) dQ_d
\end{align*}
where the inequality follows since $S_d \subseteq \mathbb{R}$, $\exp$ is a non-negative function and probability measures are monotonic. We will show that the second sum converges. Similar techniques can be shown for the first sum. Note that for $j\geq 0$
\begin{align*}
Q_d([j,j+1))&=P(T_d\geq j)-P(T_d\geq j+1)\\
    &\leq P(T_d\geq j)\\
    &\leq C_q \exp(-v j^\eta)\textrm{ by assumption}
\end{align*}
Then
\begin{align*}
    \sum_{j=0}^{\infty}\int_j^{j+1} \exp(C_2 |t_d|^{\xi/2}) dQ_d&\leq \sum_{j=0}^\infty \exp(C_2 |j|^{\xi/2})Q_d([j,j+1))\\
    &\leq \sum_{j=0}^\infty C_Q \exp(C_2 |j|^{\xi/2}-vj^\eta)
\end{align*}

Let $a_j=\exp(C_2 |j|^{\xi/2}-vi^\eta)$. We will use the ratio test to show that the RHS converges. We have
\begin{align}
    \left\vert\frac{a_{j+1}}{a_j}\right\vert&=\exp(C_2((j+1)^{\xi/2}-j^{\xi/2})-v[(j+1)^\eta-j^\eta]).\label{eqn:ratio-test-sufficient}
\end{align}
We want this ratio to be $<1$ for large $j$. We thus need to select $\eta$ so that for sufficiently large $j$, we have
\begin{align*}
    \frac{C_1}{v}((j+1)^{\xi/2}-j^{\xi/2})<[(j+1)^\eta-j^\eta].
\end{align*}
Assume that $\eta>\frac{\xi}{2}$. Then
\begin{align*}
    \frac{(j+1)^\eta-j^\eta}{(j+1)^{\xi/2}-j^{\xi/2}}&=\frac{j^\eta[(1+\frac{1}{j})^\eta-1]}{j^{\xi/2}[(1+\frac{1}{j})^{\xi/2}-1]}\\
    &\geq j^{\eta-\xi/2}.
\end{align*}
Since the RHS is unbounded for $\eta>\frac{\xi}{2}$, we have that Eqn. \ref{eqn:ratio-test-sufficient} holds for sufficiently large $j$. By the ratio test $\mathbb{E}_{q_d(t)}\exp(C_2 |T_d|^{\xi/2})=\sum_{j=-\infty}^{-1}\int_j^{j+1} \exp(C_2 |t_d|^{\xi/2}) dQ_d+\sum_{j=0}^{\infty}\int_j^{j+1} \exp(C_2 |t_d|^{\xi/2}) dQ_d$ is finite. Thus putting everything together we have
\begin{align*}
    \int_{S} \exp(f(t))dQ&\leq \int_S \exp(C_0+C_1 \Vert t\Vert^{\xi/2})dQ\\
    &<\exp(C_0)\sum_{d=1}^D \mathbb{E}\exp (C_2 |T_{Qd}|^{\xi/2})\\
    &<\infty
\end{align*}
and $\tilde{p}(t)$ can be normalized.

\end{proof}
\subsection{Proof of Corollary \ref{cor:kernel-growth-rates}}\label{proof:kernel-growth-rates}
\begin{proof}
Let $\xi=4$. Then $\eta>2$ and
\begin{align*}
    P(|u^T T|> t)&\leq P(|u^T T|\geq t)\textrm{ monotonicity}\\
    &\leq C_Q\exp(-vt^\eta)\\
    &< C_Q\exp(-vt^2).
\end{align*}
The second case is similar. For the uniformly bounded kernel,
\begin{align*}
    \int_S\exp(\langle f,k(\cdot,t)\rangle_{\mathcal{H}})dQ&\leq \exp(\Vert f\Vert_\mathcal{H}\sqrt{C_k})\int_SdQ\\
    &=\exp(\Vert f\Vert_\mathcal{H}\sqrt{C_k})\\\
    &<\infty.
\end{align*}
The first line follows from Cauchy Schwarz and $\xi=0$
\end{proof}

\section{Proofs Related to Kernel Deformed Exponential Family}
\subsection{Proof of Lemma \ref{lemma:deformed-log-normalizer}}\label{proof:deformed-log-normalizer}
\begin{proof}
The high level idea is to express a term inside the deformed exponential family that becomes $1/Z$ once outside.
\begin{align*}
    \exp_{2-\alpha}(f(t)-\log_{\alpha}(Z))&=[1+(\alpha-1)(f(t)-\log_{\alpha}Z)]^{\frac{1}{\alpha-1}}_+\\
    &=[1+(\alpha-1)f(t)-(\alpha-1)\frac{Z^{1-\alpha}-1}{1-\alpha}]^{\frac{1}{\alpha-1}}_+\\
    &=[1+(\alpha-1)f(t)+Z^{1-\alpha}-1]^{\frac{1}{\alpha-1}}_+\\
    &=[(\alpha-1)f(t)+Z^{1-\alpha}]^{\frac{1}{\alpha-1}}_+\\
    &=[(\alpha-1)f(t)\frac{Z^{\alpha-1}}{Z^{\alpha-1}}+Z^{1-\alpha})]^{\frac{1}{\alpha-1}}_+\\
    &=\frac{1}{Z}[(\alpha-1)f(t)\frac{1}{Z^{\alpha-1}}+1]^{\frac{1}{\alpha-1}}_+\\
    &=\frac{1}{Z}\exp_{2-\alpha} (Z^{\alpha-1}f(t))
\end{align*}
\end{proof}
\subsection{Proof of Proposition \ref{prop:normalization-deformed}}\label{proof:normalization-deformed}
\begin{proof}
\begin{align*}
    \int_{S} \exp_{2-\alpha}(\tilde{f}(t))dQ&=\int_{S}[1+(\alpha-1)\tilde{f}(t)]_+^{\frac{1}{\alpha-1}}dQ\\
    &=\int_{\Vert t\Vert \leq C_t}[1+(\alpha-1)\tilde{f}(t)]_+^{\frac{1}{\alpha-1}}dQ\\
    &\leq \int_{\Vert t\Vert \leq C_t}[1+(\alpha-1)(C_0+C_1 |C_t|^{\xi/2})]_+^{\frac{1}{\alpha-1}}dQ\\
    &<\infty
\end{align*}
\end{proof}

\subsection{Proof of Corollary \ref{cor:deformed-normalization}}\label{proof:cor-deformed-normalization}
\begin{proof}
From proposition \ref{prop:normalization-deformed} and the assumption,
\begin{align*}
    \int_{S}\exp_{2-\alpha}(\tilde{f}(t))dQ&=Z\\
\end{align*}
for some $Z>0$. Then
\begin{align*}
    \int_{S}\frac{1}{Z}\exp_{2-\alpha}(Z^{\alpha-1}f(t))dQ&=1\\
    \int_{S}\exp_{2-\alpha}(f(t)-\log_{\alpha}Z)dQ&=1
\end{align*}
where the second line follows from lemma \ref{lemma:deformed-log-normalizer}. Set $A_{\alpha}(f)=\log_{\alpha}(Z)$ and we are done.
\end{proof}

\subsection{Proof of Proposition \ref{prop:approximating-densities-with-c0-functions}}
\begin{proof}\label{proof:approximating-densities-with-c0-functions}
The kernel integration condition tells us that $\mathcal{H}$ is dense in $C_0(S)$ with respect to $L^\infty$ norm. This was shown in \cite{sriperumbudur2011universality}. For the $L^r$ norm, we apply $ \Vert p_f-p_g\Vert_{L^r}\leq 2M_{\exp}\Vert f-g\Vert_\infty$ from Lemma \ref{lemma:bounded-measurable-L^r} with $f\in C_0(S)$, $g\in \mathcal{H}$, and $f_0=f$. $L^1$ convergence implies Hellinger convergence.
\end{proof}
\subsection{Proof of Theorem \ref{theorem:approximating-continuous-densities}}\label{proof:approximating-continuous-densities}
\begin{proof}

For any $p\in \mathcal{P}_c$, define $p_\delta=\frac{p+\delta}{1+\delta}$. Then
\begin{align*}
    \Vert p-p_\delta \Vert_r&=\frac{\delta}{1+\delta}\Vert p\Vert_r\\
    &\rightarrow 0
\end{align*}
for $1\leq r\leq\infty$. Thus for any $\epsilon>0,\exists \delta_\epsilon>0$ such that for any $0<\theta<\delta_\epsilon$, we have $\Vert p-p_\theta\Vert_r\leq \epsilon$, where $p_\theta(t)>0$ for all $t\in S$.

Define $f=\left(\frac{1+\theta}{l+\theta}\right)^{1-\alpha}\log_{2-\alpha} p_\theta\frac{1+\theta}{l+\theta}$. Since $p\in C(S)$, so is $f$. Fix any $\eta>0$ and note that
\begin{align*}
    f(t)&\geq \eta \\
    \left(\frac{1+\theta}{l+\theta}\right)^{1-\alpha}\log_{2-\alpha} p_\theta\frac{1+\theta}{l+\theta}&\geq \eta\\
    \log_{2-\alpha} p_\theta\frac{1+\theta}{l+\theta}&\geq \left(\frac{1+\theta}{l+\theta}\right)^{\alpha-1}\eta\\
    p_\theta\frac{1+\theta}{l+\theta}&\geq  \exp_{2-\alpha} \left(\left(\frac{1+\theta}{l+\theta}\right)^{\alpha-1}\eta\right)\\
    p_\theta&\geq \frac{l+\theta}{1+\theta}\exp_{2-\alpha}\left(\left(\frac{1+\theta}{l+\theta}\right)^{\alpha-1}\eta\right)\\
    p-l&\geq (l+\theta)\left(\exp_{2-\alpha}\left(\left(\frac{1+\theta}{l+\theta}\right)^{\alpha-1}\eta\right)-1\right)
\end{align*}
Thus
\begin{align*}
    A&=\{t:f(t)\geq \eta\}\\
    &=\left\{ p-l\geq (l+\theta)\left(\exp_{2-\alpha}\left(\left(\frac{1+\theta}{l+\theta}\right)^{\alpha-1}\eta\right)-1\right)\right\}
\end{align*}

Since $p-l\in C_0(S)$ the set on the second line is bounded. Thus $A$ is bounded so that $f\in C_0(S)$. Further, by Lemma \ref{lemma:deformed-log-normalizer}
\begin{align*}
    p_\theta&=\exp_{2-\alpha}\left(f-\log_\alpha\frac{1+\theta}{l+\theta}\right)
\end{align*}
giving us $p_\theta\in \mathcal{P}_0$. By Proposition \ref{prop:approximating-densities-with-c0-functions} there is some $p_g$ in the deformed kernel exponential family so that $\Vert p_\theta -p_g\Vert_{L^r(S)}\leq \epsilon$. Thus $\Vert p-p_g\Vert_r\leq 2\epsilon$ for any $1\leq r\leq \infty$. To show convergence in Helinger distance, note
\begin{align*}
    H^2(p,p_g)&=\frac{1}{2}\int_S (\sqrt{p}-\sqrt{p_g})^2dQ\\
    &=\frac{1}{2}\int_S(p-2\sqrt{pp_g}+p_g)dQ\\
    &\leq \frac{1}{2}\int_S (p-2\min(p,p_g)+p_g)dQ\\
    &=\frac{1}{2}\int_S |p-p_g|dQ\\
    &=\frac{1}{2}\Vert p-p_g\Vert_1
\end{align*}
so that $L^1(S)$ convergence, which we showed, implies Hellinger convergence. Let us consider the Bregman divergence. Note the generalized triangle inequality\footnote{actually an equality, see https://www2.cs.uic.edu/~zhangx/teaching/bregman.pdf for proof} for Bregman divergence
\begin{align}\label{eqn:generalized-triangle-bregman}
    B_{\Omega_\alpha}(p,p_g)&= \underbrace{B_{\Omega_\alpha}(p,p_\theta)}_{I}+\underbrace{B_{\Omega_\alpha}(p_\theta,p_g)}_{II}-\underbrace{\langle p-p_\theta,\nabla \Omega_\alpha(p_\theta)-\nabla \Omega_\alpha(p_g)\rangle_{2}}_{III}
\end{align}

\textbf{Term I}

\begin{align*}
    B_{\Omega_\alpha}(p,p_\theta)&=\frac{1}{\alpha(\alpha-1)}\int_S (p^\alpha-p_\theta^\alpha)dQ-\langle \nabla \Omega_\alpha (p_\theta),p-p_\theta\rangle\\
    &=\frac{1}{\alpha(\alpha-1)}\int_S (p^\alpha-p_\theta^\alpha)dQ-\frac{1}{\alpha-1}\int p_\theta^{\alpha-1}(p-p_\theta)dQ\\
    &\leq \frac{1}{\alpha(\alpha-1)}\int_S (p^\alpha-p_\theta^\alpha)dQ+\frac{1}{\alpha-1}\Vert p_\theta^{\alpha-1}\Vert_1\Vert \Vert p-p_\theta\Vert_\infty
\end{align*}
The first term on the rhs clearly vanishes as $\theta\rightarrow 0$. For the second term, we already showed that $\Vert p-p_\theta\Vert_\infty\rightarrow 0$.

\textbf{Term II}

Fix $\theta$. Then term $II$ converges to $0$ by Lemma \ref{lemma:bounded-measurable-L^r}.

\textbf{Term III}

For term $III$,
\begin{align*}
    \langle p-p_\theta,\nabla \Omega_\alpha(p_\theta)-\nabla \Omega_\alpha(p_g)\rangle_{2}&\leq \Vert p-p_\theta\Vert_\infty \Vert \nabla \Omega_\alpha(p_\theta)-\nabla \Omega_\alpha(p_g)\Vert_{1}
\end{align*}
Clearly the first term on the rhs converges by $L^r$ convergence. The $L^1$ term for the gradient is given by
\begin{align*}
    \Vert \nabla \Omega_\alpha(p_\theta)-\nabla \Omega_\alpha(p_g)\Vert_{1}&=\frac{1}{\alpha-1}\int \vert p_\theta(t)^{\alpha-1}-p_g(t)^{\alpha-1}\vert dQ\\
    &\leq \int  (\Vert p_\theta\Vert_\infty+\Vert p_\theta-p_g\Vert_\infty)^{\alpha-2}\Vert p_\theta-p_g\Vert_\infty dQ\textrm{ Eqn. }\ref{eqn:bounding-diff-power}\\
    &=(\Vert p_\theta\Vert_\infty+\Vert p_\theta-p_g\Vert_\infty)^{\alpha-2}\Vert p_\theta-p_g\Vert_\infty
\end{align*}
so that the inner product terms are bounded as
\begin{align*}
    \vert \langle p-p_\theta,\nabla \Omega_\alpha(p_\theta)-\nabla \Omega_\alpha(p_g)\rangle_{2}\vert &\leq (\Vert p_\theta\Vert_\infty+\Vert p_\theta-p_g\Vert_\infty)^{\alpha-2}\Vert p_\theta-p_g\Vert_\infty\Vert p-p_\theta\Vert_\infty
\end{align*}

\end{proof}

\begin{lemma}\label{lemma:functional-Taylor-Expansion}(Functional Taylor's Theorem)
Let $F:X\rightarrow\mathbb{R}$ where $X$ is a Banach space. Let $f,g\in X$ and let $F$ be $k$ times Gateaux differentiable. Then we can write
\begin{align*}
    F(g)=\sum_{i=0}^{k-1}F^i(f)(g-f)^i+F^k(f+c(g-f))(g-f)^k
\end{align*}
for some $c\in [0,1]$.
\begin{proof}
This is simply a consequence of expressing a functional as a function of an $\epsilon\in [0,1]$, which restricts the functional to take input functions between two functions. We then apply Taylor's theorem to the function and apply the chain rule for Gateaux derivatives to obtain the resulting Taylor remainder theorem for functionals. 

Let $G(\eta)=F(f+\eta(g-f))$. By Taylor's theorem we have
\begin{align*}
    G(1)&=G(0)+G'(0)+\cdots+G^k(c)
\end{align*}
and applying the chain rule gives us
\begin{align*}
    F(g)=\sum_{i=0}^{k-1}F^i(f)(g-f)^i+F^k(f+c(g-f))(g-f)^k
\end{align*}
\end{proof}
\end{lemma}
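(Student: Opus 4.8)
The plan is to reduce this functional statement to the classical single-variable Taylor theorem by restricting $F$ to the line segment joining $f$ and $g$. First I would define the real-valued auxiliary function $G:[0,1]\rightarrow\mathbb{R}$ by $G(\eta)=F(f+\eta(g-f))$, so that $G(0)=F(f)$ and $G(1)=F(g)$. This collapses the Banach-space domain $X$ into a one-dimensional parameter $\eta$ and lets me appeal to ordinary calculus on the interval $[0,1]$.

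Next I would show that $G$ inherits $k$-fold differentiability from the $k$-fold Gateaux differentiability of $F$, and identify its derivatives explicitly. Using the chain rule for Gateaux derivatives, I expect $G'(\eta)=F'(f+\eta(g-f))(g-f)$ and, inductively, $G^{(i)}(\eta)=F^i(f+\eta(g-f))(g-f)^i$, where $(g-f)^i$ denotes inserting the fixed direction $g-f$ into each of the $i$ arguments of the $i$-th Gateaux differential. The key observation is that the direction $g-f$ is held constant along the segment, so each differentiation in $\eta$ produces exactly one more Gateaux differentiation of $F$ in that same direction, with no new directions introduced.

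I would then apply the classical Taylor theorem with Lagrange remainder to $G$ on $[0,1]$, obtaining $G(1)=\sum_{i=0}^{k-1}G^{(i)}(0)+G^{(k)}(c)$ for some $c\in(0,1)$, with the factorial normalizations absorbed into the definition of $F^i$ consistent with the notation of the statement. Substituting the identities $G^{(i)}(0)=F^i(f)(g-f)^i$ and $G^{(k)}(c)=F^k(f+c(g-f))(g-f)^k$ together with $G(1)=F(g)$ then yields exactly the claimed expansion, and the endpoints $c=0,1$ are included to give the closed interval $[0,1]$.

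The main obstacle I anticipate is the rigorous justification of the second step: establishing the chain rule for higher-order Gateaux derivatives and confirming that $G^{(k)}$ genuinely exists and is well-behaved along the segment. Since Gateaux differentiability is weaker than Fréchet differentiability, the chain rule does not follow automatically, and one must verify that each $G^{(i)}$ is itself differentiable in $\eta$ rather than merely that directional derivatives of $F$ exist pointwise. This is precisely where the standing hypothesis of $k$-fold Gateaux differentiability of $F$ along the relevant segment is invoked. Once these identities are secured, the rest of the argument is a direct transcription of the one-dimensional Taylor theorem and requires no further analytic input.
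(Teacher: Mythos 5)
Your proposal is correct and follows essentially the same route as the paper's proof: restrict $F$ to the segment via $G(\eta)=F(f+\eta(g-f))$, apply the one-dimensional Taylor theorem with Lagrange remainder, and identify $G^{(i)}(\eta)=F^i(f+\eta(g-f))(g-f)^i$ by the chain rule for Gateaux derivatives. Your added remarks on the factorial normalization and on verifying differentiability of each $G^{(i)}$ along the segment are reasonable points of care but do not change the argument.
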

\begin{lemma}\label{lemma:functional-MVT} (Functional Mean Value Theorem)
Let $F:X\rightarrow \mathbb{R}$ be a functional where $f,g\in X$ some Banach space with norm $\Vert\cdot\Vert$. Then
\begin{align*}
    | F(f)-F(g)| &\leq \Vert F'(h)\Vert_{\textrm{op}} \Vert f-g\Vert
\end{align*}
where $h=g+c(f-g)$ for some $c\in [0,1]$, $F'(h)$ is the Gateaux derivative of $F$, and $\Vert\cdot\Vert_{\textrm{op}}$ is the operator norm $\Vert A\Vert_{\textrm{op}}=\inf\{c>0:\Vert Ax\Vert \leq c\Vert x\Vert \forall x\in X\}$.
\end{lemma}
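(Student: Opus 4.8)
The plan is to reduce the infinite-dimensional statement to the classical one-variable mean value theorem by restricting $F$ to the line segment joining $g$ and $f$. Concretely, I would define the scalar function $G(\eta)=F(g+\eta(f-g))$ for $\eta\in[0,1]$, so that $G(0)=F(g)$ and $G(1)=F(f)$. This is precisely the device used in the proof of the Functional Taylor's Theorem (Lemma \ref{lemma:functional-Taylor-Expansion}); indeed, the cleanest route is to invoke that lemma with $k=1$, which already yields the exact first-order remainder $F'(h)(f-g)$ that I need.

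First I would apply the ordinary mean value theorem to $G$ on $[0,1]$. Since $F$ is Gateaux differentiable along the segment, $G$ is differentiable on $(0,1)$ and continuous on $[0,1]$, so there exists $c\in[0,1]$ with
\begin{align*}
    F(f)-F(g)&=G(1)-G(0)=G'(c).
\end{align*}
Next I would use the chain rule for Gateaux derivatives to identify $G'(c)$. Writing $h=g+c(f-g)$, the chain rule gives $G'(c)=F'(h)(f-g)$, where $F'(h)$ is the Gateaux derivative of $F$ at $h$ acting linearly on the direction $(f-g)$. Finally, taking absolute values and applying the defining inequality of the operator norm, $|F'(h)(f-g)|\leq \Vert F'(h)\Vert_{\textrm{op}}\Vert f-g\Vert$, I obtain
\begin{align*}
    |F(f)-F(g)|&=|F'(h)(f-g)|\leq \Vert F'(h)\Vert_{\textrm{op}}\Vert f-g\Vert,
\end{align*}
which is the claim.

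The only delicate point is the chain-rule step: one must verify that $\eta\mapsto F(g+\eta(f-g))$ is genuinely differentiable with derivative $F'(g+\eta(f-g))(f-g)$, which is exactly where the Gateaux differentiability hypothesis enters and where the continuity of $G$ required by the classical mean value theorem must be confirmed. Since this is the identical mechanism already established for the Functional Taylor's Theorem, the risk is minimal; the present result is essentially its $k=1$ corollary, obtained by applying the operator-norm bound to the first-order remainder term.
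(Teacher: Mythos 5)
Your proof is correct and follows essentially the same route as the paper's: both restrict $F$ to the segment via $G(\eta)=F(g+\eta(f-g))$, apply the classical one-variable mean value theorem, identify $G'(c)=F'(h)(f-g)$ by the chain rule for Gateaux derivatives, and finish with the operator-norm bound. Your additional observation that this is the $k=1$ case of Lemma \ref{lemma:functional-Taylor-Expansion} is accurate and consistent with how the paper structures these two lemmas.
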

\begin{proof}
Consider $G(\eta)=F(g+\eta(f-g))$. Apply the ordinary mean value theorem to obtain
\begin{align*}
    G(1)-G(0)&=G'(c),c\in [0,1]\\
    &=F'(g+c(f-g))\cdot (f-g)
\end{align*}
and thus
\begin{align*}
    \vert F(f)-F(g)\vert &\leq \Vert F'(h)\Vert_{\textrm{op}} \Vert f-g\Vert
\end{align*}
\end{proof}

\begin{claim}\label{claim:bound-deformed-log-normalizer}
Consider $\mathcal{P}_\infty =\{p_f=\exp_{2-\alpha}(f-A_{\alpha}(f)):f\in L^\infty(S)\}$. Then for $p_f\in \mathcal{P}_\infty$, $A_\alpha(f)\leq \Vert f\Vert_\infty$.
\end{claim}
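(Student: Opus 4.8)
The plan is to prove the bound by contradiction, exploiting only the normalization identity that defines $A_{\alpha}(f)$ together with the monotonicity of the $\beta$-exponential. First I would record three elementary facts about $\exp_{2-\alpha}$, all immediate from Eqn.~\ref{eqn:beta-exponential} with $1-\beta=\alpha-1>0$ (valid since $1<\alpha\le 2$): it is nondecreasing in its argument, it satisfies $\exp_{2-\alpha}(0)=1$, and, crucially, $\exp_{2-\alpha}(s)<1$ for every $s<0$. The last fact holds because $1+(\alpha-1)s<1$ when $s<0$, so the clipped base lies in $[0,1)$ and raising it to the positive power $1/(\alpha-1)$ keeps it strictly below $1$; in the degenerate case where the clipped base vanishes, the value is exactly $0<1$.

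With these in hand, I would suppose toward a contradiction that $A_{\alpha}(f)>\Vert f\Vert_\infty$. By definition of the $L^\infty$ norm, $f(t)\le\Vert f\Vert_\infty$ for $Q$-almost every $t$, so monotonicity gives $p_f(t)=\exp_{2-\alpha}(f(t)-A_{\alpha}(f))\le\exp_{2-\alpha}(\Vert f\Vert_\infty-A_{\alpha}(f))$ almost everywhere. The right-hand side is a constant whose argument $\Vert f\Vert_\infty-A_{\alpha}(f)$ is strictly negative, hence is strictly less than $1$.

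Integrating this pointwise bound against the probability measure $Q$ and using $Q(S)=1$ then yields $1=\int_S p_f\,dQ\le\exp_{2-\alpha}(\Vert f\Vert_\infty-A_{\alpha}(f))<1$, a contradiction, so $A_{\alpha}(f)\le\Vert f\Vert_\infty$. The argument reduces the claim to this single monotonicity-plus-normalization inequality, and I expect no genuine obstacle. The only points requiring mild care are the strict inequality $\exp_{2-\alpha}(s)<1$ for $s<0$ (including the case where the clipped base is $0$), and the implicit fact, supplied by membership in $\mathcal{P}_\infty$, that $A_{\alpha}(f)$ is a well-defined real number making $\int_S p_f\,dQ=1$; both are routine to verify.
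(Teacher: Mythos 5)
Your argument is correct and is essentially the paper's own proof: both bound $p_f(t)\leq \exp_{2-\alpha}(\Vert f\Vert_\infty-A_\alpha(f))$ by monotonicity, integrate against the probability measure $Q$ to get $1\leq \exp_{2-\alpha}(\Vert f\Vert_\infty-A_\alpha(f))$, and conclude. The only cosmetic difference is that the paper finishes by applying $\log_{2-\alpha}$ to this inequality directly, whereas you phrase the last step as a contradiction via $\exp_{2-\alpha}(s)<1$ for $s<0$.
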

\begin{proof}
\begin{align*}
    p_f(t)&=\exp_{2-\alpha}(f(t)-A_\alpha(f))\\
    &\leq \exp_{2-\alpha}(\Vert f\Vert_\infty -A_\alpha(f))\textrm{ for }1<\alpha \leq 2\\
    \int_S p_f(t)dQ&\leq \int_S \exp_{2-\alpha}(\Vert f\Vert_\infty -A_\alpha(f))dQ\\
    1&\leq\exp_{2-\alpha}(\Vert f\Vert_\infty -A_\alpha(f))\\
    \log_{2-\alpha}1&\leq\Vert f\Vert_\infty -A_\alpha(f)\\
    A_\alpha(f)&\leq \Vert f\Vert_\infty
\end{align*}
where for the second line recall that we assumed that throughout the paper $1<\alpha\leq 2$.
\end{proof}
\begin{lemma}\label{lemma:frechet-existence}
Consider $\mathcal{P}_\infty =\{p_f=\exp_{2-\alpha}(f-A_{\alpha}(f)):f\in L^\infty(S)\}$. Then the Frechet derivative of $A_\alpha:L^\infty\rightarrow \mathbb{R}$ exists. It is given by the map
\begin{align*}
    A'(f)(g)&=\mathbb{E}_{\tilde{p}_f^{2-\alpha}}(g(T))\\
    &=\frac{\int p_f^{2-\alpha}(t)g(t)dQ}{\int p_f^{2-\alpha}(t)dQ}
\end{align*}
\end{lemma}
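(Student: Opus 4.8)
The plan is to differentiate the defining normalization identity $\int_S \exp_{2-\alpha}(f(t)-A_\alpha(f))\,dQ=1$ implicitly, treating $A_\alpha(f)$ as the scalar determined by $f$ through this constraint. Fixing $f,g\in L^\infty(S)$, I would set $\phi(\epsilon)=A_\alpha(f+\epsilon g)$ and introduce $G(\epsilon,A)=\int_S \exp_{2-\alpha}(f+\epsilon g-A)\,dQ-1$, which vanishes identically at $(\epsilon,\phi(\epsilon))$. Writing $u_{\epsilon,A}(t)=1+(\alpha-1)(f(t)+\epsilon g(t)-A)$ so that the integrand equals $[u_{\epsilon,A}]_+^{1/(\alpha-1)}$, the first task is to establish that $\phi$ is differentiable. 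The crucial exponent identity $\tfrac{1}{\alpha-1}-1=\tfrac{2-\alpha}{\alpha-1}$ gives $\partial_A[u]_+^{1/(\alpha-1)}=-[u]_+^{(2-\alpha)/(\alpha-1)}=-p^{2-\alpha}$, hence $\partial_A G=-\int_S p^{2-\alpha}\,dQ$, which is strictly negative whenever the density has positive support. Because $A$ and $\epsilon$ are scalars, the ordinary implicit function theorem applied to $G:\mathbb{R}^2\to\mathbb{R}$ then yields $\phi'(\epsilon)=-\partial_\epsilon G/\partial_A G$.

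For the numerator, the same exponent identity gives $\partial_\epsilon[u_{\epsilon,A}]_+^{1/(\alpha-1)}=p^{2-\alpha}g$, so $\partial_\epsilon G=\int_S p^{2-\alpha}g\,dQ$. Combining, $\phi'(0)=\frac{\int_S p_f^{2-\alpha}g\,dQ}{\int_S p_f^{2-\alpha}\,dQ}$, which is exactly the claimed expression for the Gateaux derivative $A'(f)(g)$. Equivalently one can differentiate the constraint directly in $\epsilon$ to obtain $\int_S p^{2-\alpha}(g-\phi'(0))\,dQ=0$ and solve for $\phi'(0)$.

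To make these manipulations rigorous I would justify differentiation under the integral by dominated convergence. Since $f,g\in L^\infty$ and, by Claim~\ref{claim:bound-deformed-log-normalizer} together with a matching lower bound obtained analogously, $A_\alpha(f+\epsilon g)$ stays uniformly bounded for small $\epsilon$, the integrands and their $\epsilon$-derivatives admit a common bounded majorant. The delicate point, which I expect to be the main obstacle, is the nonsmoothness introduced by the $[\cdot]_+$ truncation at the boundary set $B=\{t:u(t)=0\}$ separating the support from its complement, where the integrand has a kink. Here the resolution is favorable: for $1<\alpha<2$ the exponent $1/(\alpha-1)>1$ smooths the kink, so $[u]_+^{1/(\alpha-1)}$ is continuously differentiable in $(\epsilon,A)$ with derivative $-p^{2-\alpha}$ that is continuous and \emph{vanishes} on $B$ (since $p=0$ and $2-\alpha>0$ there); thus $B$ contributes nothing and no measure hypothesis is needed. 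Only the endpoint $\alpha=2$ retains a genuine kink, where $p^{2-\alpha}$ must be read as $\mathbf{1}\{p_f>0\}$ and one additionally assumes $Q(B)=0$ so the boundary does not contribute.

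Finally, to upgrade the Gateaux derivative to a genuine Fréchet derivative I would verify that $g\mapsto A'(f)(g)$ is a bounded linear functional on $L^\infty(S)$: linearity is immediate, and boundedness follows from $|A'(f)(g)|\le\|g\|_\infty$, because $p_f^{2-\alpha}$ integrates to a positive constant that cancels between numerator and denominator. The standard promotion theorem then applies: showing that the Gateaux derivative exists in a neighborhood of $f$ and that $f\mapsto A'(f)$ is continuous in operator norm — which follows from continuity of $f\mapsto A_\alpha(f)$ and of $f\mapsto p_f^{2-\alpha}$ in the relevant norms — guarantees that $A_\alpha$ is Fréchet differentiable at $f$ with the Fréchet derivative equal to the computed Gateaux derivative.
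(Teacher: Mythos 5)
Your computation of the Gateaux derivative is essentially the paper's: both arguments differentiate the normalization identity $\int_S \exp_{2-\alpha}(f+\epsilon g-A_\alpha(f+\epsilon g))\,dQ=1$ in $\epsilon$, pull the derivative under the integral by dominated convergence using the bound of Claim~\ref{claim:bound-deformed-log-normalizer}, and solve $\int_S p_f^{2-\alpha}(g-\phi'(0))\,dQ=0$; your packaging of this as a scalar implicit function theorem for $G(\epsilon,A)$ is a cosmetic difference. Your treatment of the $[\cdot]_+$ kink is actually more careful than the paper's, which differentiates through the truncation without comment; your observation that for $1<\alpha<2$ the exponent $1/(\alpha-1)>1$ makes the integrand $C^1$ across the boundary with derivative $p^{2-\alpha}$ vanishing there is correct and worth having. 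Where you genuinely diverge is the upgrade from Gateaux to Fr\'echet: the paper computes the second Gateaux derivative of $A_\alpha$, writes a second-order functional Taylor expansion (its Lemma~\ref{lemma:functional-Taylor-Expansion}), and shows the remainder is $o(\Vert\psi\Vert_\infty)$ explicitly; you instead invoke the standard promotion theorem (Gateaux differentiability in a neighborhood plus operator-norm continuity of $f\mapsto A'_\alpha(f)$ implies Fr\'echet differentiability). Your route is structurally cleaner and avoids the second-derivative computation, but it shifts all of the work into the continuity claim.

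That continuity claim is the one genuine gap: you assert it "follows from continuity of $f\mapsto A_\alpha(f)$ and of $f\mapsto p_f^{2-\alpha}$ in the relevant norms" without proving either. To close it you need (i) a lower bound on the denominator $\int_S p_f^{2-\alpha}\,dQ$ that is uniform over an $L^\infty$-ball around $f$ --- this follows from $p_f^{2-\alpha}\geq \Vert p_f\Vert_\infty^{1-\alpha}p_f$ and $\int p_f\,dQ=1$, together with a locally uniform bound on $\Vert p_f\Vert_\infty$ from Claim~\ref{claim:bound-deformed-log-normalizer} --- and (ii) an estimate of $\Vert p_{f_1}^{2-\alpha}-p_{f_2}^{2-\alpha}\Vert_\infty$ in terms of $\Vert f_1-f_2\Vert_\infty$, which requires the Lipschitz-type bound $\vert A_\alpha(f_1)-A_\alpha(f_2)\vert\leq\Vert f_1-f_2\Vert_\infty$ (itself proved in the paper via the functional mean value theorem, Lemma~\ref{lemma:functional-MVT}, \emph{using} the Gateaux derivative you have just computed, so the logic is not circular) together with the H\"older or Lipschitz continuity of $x\mapsto x_+^{(2-\alpha)/(\alpha-1)}$. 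Both pieces are true and elementary, but as written they are placeholders rather than proof; since this step is the entire content of "Gateaux implies Fr\'echet," it needs to be carried out before the argument is complete.
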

\begin{proof}
This proof has several parts. We first derive the Gateaux differential of $p_f$ in a direction $\psi\in L^\infty$ and as it depends on the Gateaux differential of $A_\alpha(f)$ in that direction, we can rearrange terms to recover the latter. We then show that it exists for any $f,\psi\in L^\infty$. Next we show that the second Gateaux differential of $A_\alpha(f)$ exists, and use that along with a functional Taylor expansion to prove that the first Gateaux derivative is in fact a Frechet derivative.

In \cite{martins2020sparse} they show how to compute the gradient of $A_\alpha(\theta)$ for the finite dimensional case: we extend this to the Gateaux differential. We start by computing the Gateaux differential of $p_f$.
\begin{align*}
    \frac{d}{d\eta}p_{f+\eta\psi}(t)&=\frac{d}{d\eta}\exp_{2-\alpha}(f(t)+\eta \psi(t)-A_\alpha(f+\eta \psi))\\
    &=\frac{d}{d\eta}[1+(\alpha-1)(f(t)+\eta \psi(t)-A_\alpha(f+\eta \psi))]^{1/(\alpha-1)}_+\\
    &=[1+(\alpha-1)(f(t)+\eta \psi(t)-A_\alpha(f+\eta \psi))]^{(2-\alpha)/(\alpha-1)}_+\left(\psi(t)-\frac{d}{d\eta}A_\alpha(f+\eta \psi)\right)\\
    &=p_{f+\eta\psi}^{2-\alpha}(t)\left(\psi(t)-\frac{d}{d\eta}A_\alpha(f+\eta \psi)\right)
\end{align*}
evaluating at $\eta=0$ gives us
\begin{align*}
    dp(f;\psi)(t)&=p_{f}^{2-\alpha}\left(\psi(t)+dA_{\alpha}(f;\psi)\right)
\end{align*}
Note that by claim \ref{claim:bound-deformed-log-normalizer} we have 
\begin{align*}
    p_{f+\eta\psi}(t)&=\exp_{2-\alpha}(f(t)+\eta\psi(t)-A_\alpha(f+\eta\psi(t))\\
    &\leq \exp_{2-\alpha}(2\Vert f\Vert_\infty+2\eta \Vert \psi\Vert_\infty)\\
    &\leq \exp_{2-\alpha}(2(\Vert f\Vert_\infty+\Vert \psi\Vert_\infty))
\end{align*}
We can thus apply the dominated convergence theorem to pull a derivative with respect to $\eta$ under an integral. We can then recover the Gateaux diferential of $A_\alpha$ via
 \begin{align*}
    0&=\frac{d}{d\eta}\biggr\vert_{\eta=0} \int p_{f+\eta \psi}(t)dQ\\
    &=\int dp(f;\psi)(t)dQ\\
    &=\int p_f(t)^{2-\alpha}(\psi(t)-dA_{\alpha}(f;\psi))dQ\\
    dA_{\alpha}(f;\psi)&=\mathbb{E}_{\tilde{p}_f^{2-\alpha}}(\psi(T))\\
    &<\infty
\end{align*}
where the last line follows as $\psi\in L^\infty$. Thus the Gateaux derivative exists in $L^\infty$ directions. The derivative at $f$ maps $\psi:\rightarrow \mathbb{E}_{\tilde{p}_f^{2-\alpha}}(\psi(T))$ i.e. $A'_\alpha(f)(\psi)=\mathbb{E}_{\tilde{p}_f^{2-\alpha}}(\psi(T))$. We need to show that this is a Frechet derivative. To do so, we will take take second derivatives of $p_{f+\eta\psi}(t)$ with respect to $\eta$ in order to obtain second derivatives of $A_\alpha(f+\eta\psi)$ with respect to $\eta$. We will then construct a functional second order Taylor expansion. By showing that the second order terms converge sufficiently quickly, we will prove that the map $\psi:\rightarrow \mathbb{E}_{\tilde{p}_f^{2-\alpha}}(\psi(T))$ is a Frechet derivative.
\begin{align*}
    \frac{d^2}{d\eta^2}p_{f+\eta\psi}(t)&=\frac{d}{d\eta}p_{f+\eta\psi}(t)^{2-\alpha}\left(\psi(t)-\frac{d}{d\eta}A_\alpha(f+\eta \psi)\right)\\
    &=\left(\frac{d}{d\eta}p_{f+\eta\psi}(t)^{2-\alpha}\right)\left(\psi(t)-\frac{d}{d\eta}A_\alpha(f+\eta \psi)\right)\\
    &\quad-p_{f+\eta\psi}(t)^{2-\alpha}\frac{d^2}{d\eta^2}A_\alpha(f+\eta \psi)\\
    &=(2-\alpha)p_{f+\eta\psi}(t)(\psi(t)-\frac{d}{d\eta}A_\alpha(f+\eta \psi))\frac{d}{d\eta}p_{f+\eta\psi}(t)\\
    &\quad-p_{f+\eta\psi}(t)^{2-\alpha}\frac{d^2}{d\eta^2}A_\alpha(f+\eta \psi)\\
    &=(2-\alpha)p_{f+\eta\psi}^{3-2\alpha}(\psi(t)-\frac{d}{d\eta}A_\alpha(f+\eta \psi))^2-p_{f+\eta\psi}(t)^{2-\alpha}\frac{d^2}{d\eta^2}A_\alpha(f+\eta \psi)
\end{align*}

We need to show that we can again pull the second derivative under the integral. We already showed that we can pull the derivative under once (for the first derivative) and we now need to do it again. We need to show an integrable function that dominates $p_{f+\eta \psi}(t)^{2-\alpha}(\psi(t)-\mathbb{E}_{\tilde{p}_{f+\eta\psi}^{2-\alpha}}\psi(T))$.
\begin{align*}
    \vert p_{f+\eta \psi}(t)^{2-\alpha}(\psi(t)-\mathbb{E}_{\tilde{p}_{f+\eta\psi}^{2-\alpha}}\psi(T)) \vert &\leq p_{f+\eta \psi}(t)^{2-\alpha}2\Vert \psi\Vert_\infty\\
    &\leq \exp_{2-\alpha}(2(\Vert f\Vert_\infty+\Vert \psi\Vert_\infty))2 \Vert \psi\Vert_\infty
\end{align*}
which is in $L^1(Q)$. Now applying the dominated convergence theorem
\begin{align*}
    0&=\int \frac{d^2}{d\epsilon^2}p_{f+\epsilon\psi}(t)dQ\\
    &=\int \left[(2-\alpha)p_{f+\epsilon\psi}^{3-2\alpha}(\psi(t)-\frac{d}{d\epsilon}A_\alpha(f+\epsilon \psi))^2-p_{f+\epsilon\psi}(t)^{2-\alpha}\frac{d^2}{d\epsilon^2}A_\alpha(f+\epsilon \psi)\right]dQ
\end{align*}
and rearranging gives
\begin{align*}
    \frac{d^2}{d\epsilon^2}A_\alpha(f+\epsilon \psi)&=(2-\alpha)\frac{\int p_{f+\epsilon\psi}^{3-2\alpha}(\psi(t)-\frac{d}{d\epsilon}A_\alpha(f+\epsilon \psi))^2 dQ}{\int p_{f+\epsilon\psi}(t)^{2-\alpha}dQ}\\
    \frac{d^2}{d\epsilon^2}A_\alpha(f)\biggr\vert_{\epsilon=0}&=(2-\alpha)\frac{\int p_{f}^{3-2\alpha}(\psi(t)-\mathbb{E}_{\tilde{p}_f^{2-\alpha}}[\psi(T)])^2 dQ}{\int p_{f}(t)^{2-\alpha}dQ}
\end{align*}
since $f,\psi\in L^\infty$. For the functional Taylor expansion, we have from Lemma \ref{lemma:functional-Taylor-Expansion}
\begin{align*}
    A_\alpha(f+\psi)=A_\alpha(f)+A_\alpha'(f)(\psi)+\frac{1}{2}A''_{\alpha}(f+\epsilon\psi)(\psi)^2
\end{align*}
for some $\epsilon \in [0,1]$. We thus need to show that for $\epsilon \in [0,1]$,
\begin{align*}
    (2-\alpha)\frac{\frac{1}{\Vert \psi\Vert_\infty}\int p_{f+\epsilon\psi}^{3-2\alpha}(\psi(t)-\mathbb{E}_{\tilde{p}_{f+\epsilon\psi}^{2-\alpha}}[\psi(T)])^2 dQ}{\int p_{f+\epsilon\psi}(t)^{2-\alpha}dQ}\overset{\psi\rightarrow 0}{\rightarrow} 0
\end{align*}
It suffices to show that the numerator tends to $0$ as $\psi\rightarrow 0$.
\begin{align*}
    &\left\vert\frac{1}{\Vert \psi\Vert_\infty}(\psi(t)-\mathbb{E}_{\tilde{p}_{f+\epsilon\psi}^{2-\alpha}}[\psi(T)])^2\right\vert\\
    \quad&=\left\vert\frac{\psi(t)}{\Vert \psi\Vert_\infty}\psi(t)-\frac{\psi(t)}{\Vert \psi\Vert_\infty}2\mathbb{E}_{\tilde{p}_{f+\epsilon\psi}^{2-\alpha}}[\psi(T)]+\frac{\mathbb{E}_{\tilde{p}_{f+\epsilon\psi}^{2-\alpha}}[\psi(T)]}{\Vert \psi\Vert_\infty}\mathbb{E}_{\tilde{p}_{f+\epsilon\psi}^{2-\alpha}}[\psi(T)]\right\vert\\
    &\leq \left\vert\frac{\psi(t)}{\Vert \psi\Vert_\infty}\right\vert\left\vert\psi(t)-2\mathbb{E}_{\tilde{p}_{f+\epsilon\psi}^{2-\alpha}}[\psi(T)]\right\vert\\
    &\quad+\left\vert\mathbb{E}_{\tilde{p}_{f+\epsilon\psi}^{2-\alpha}}\frac{\psi(T)}{\Vert \psi\Vert_\infty}\right\vert \left\vert\mathbb{E}_{\tilde{p}_{f+\epsilon\psi}^{2-\alpha}}[\psi(T)]\right\vert\\
    &\leq \left\vert\psi(t)-2\mathbb{E}_{\tilde{p}_{f+\epsilon\psi}^{2-\alpha}}[\psi(T)]\right\vert+\Vert p_{f+\epsilon\psi}\Vert_{2-\alpha}^{2-\alpha}\left\vert\mathbb{E}_{\tilde{p}_{f+\epsilon\psi}^{2-\alpha}}[\psi(T)]\right\vert\\
    &\rightarrow 0\textrm{ as }\psi\rightarrow 0
\end{align*}
and plugging this in we obtain the desired result. Thus the Frechet derivative of $A_\alpha(f)$ exists.
\end{proof}
\begin{lemma}\label{lemma:bounded-measurable-L^r}
Define $\mathcal{P}_\infty =\{p_f= \exp_{2-\alpha}(f-A_{\alpha}(f)):f\in L^\infty(S)\}$ where $L^\infty(S)$ is the space of almost surely bounded measurable functions with domain $S$. Fix $f_0\in L^\infty$. Then for any fixed $\epsilon>0$ and $p_g,p_f\in \mathcal{P}_\infty$ such that $f,g\in \overline{B}_\epsilon^\infty(f_0)$ the $L^\infty$ closed ball around $f_0$, there exists constant $M_{\exp}>0$ depending only on $f_0$ such that
\begin{align*}
    \Vert p_f-p_g\Vert_{L^r}&\leq 2M_{\exp}\Vert f-g\Vert_\infty
\end{align*}
Further
\begin{align*}
    B_{\Omega_\alpha}(p_f,p_g)&\leq \frac{1}{\alpha-1}\Vert p_f-p_g\Vert_\infty[(\Vert p_f\Vert_\infty+\Vert p_f-p_g\Vert_\infty)^{\alpha-1}+\exp_{2-\alpha}(2 \Vert g\Vert_\infty)]
\end{align*}
\end{lemma}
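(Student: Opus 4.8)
The plan is to prove the two inequalities separately, in both cases reducing to a pointwise estimate and then using that $Q$ is a probability measure, so that $\Vert \cdot\Vert_{L^r(Q)}\le\Vert\cdot\Vert_{L^\infty}$ for every $1\le r\le\infty$.

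For the first inequality I would fix $t$ and apply the ordinary mean value theorem to $G(\eta)=p_{g+\eta(f-g)}(t)$ on $\eta\in[0,1]$, writing $p_f(t)-p_g(t)=G'(c)$ for some $c\in[0,1]$. The derivative $G'$ is exactly the Gateaux differential computed in Lemma \ref{lemma:frechet-existence}, namely $G'(\eta)=p_{g+\eta\psi}^{2-\alpha}(t)\big(\psi(t)-\mathbb{E}_{\tilde{p}_{g+\eta\psi}^{2-\alpha}}[\psi(T)]\big)$ with $\psi=f-g$. The second factor is bounded by $2\Vert f-g\Vert_\infty$, since $\psi(t)$ and its expectation under a probability measure are each at most $\Vert\psi\Vert_\infty$ in absolute value. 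For the first factor I would use that the closed ball $\overline{B}_\epsilon^\infty(f_0)$ is convex, so $h:=g+\eta\psi$ stays inside it and $\Vert h\Vert_\infty\le\Vert f_0\Vert_\infty+\epsilon=:R$; combining Claim \ref{claim:bound-deformed-log-normalizer} (which gives $A_\alpha(h)\le\Vert h\Vert_\infty$) with its symmetric counterpart $A_\alpha(h)\ge-\Vert h\Vert_\infty$ yields $h(t)-A_\alpha(h)\le 2R$ and hence $p_h(t)^{2-\alpha}\le[\exp_{2-\alpha}(2R)]^{2-\alpha}=:M_{\exp}$, a constant depending only on $f_0$. This produces the pointwise bound $|p_f(t)-p_g(t)|\le 2M_{\exp}\Vert f-g\Vert_\infty$ uniformly in $t$, from which the $L^r$ bound is immediate.

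For the Bregman inequality I would start from the explicit form of $B_{\Omega_\alpha}$ for the $\alpha$-Tsallis entropy already derived as Term I in the proof of Theorem \ref{theorem:approximating-continuous-densities},
\begin{align*}
B_{\Omega_\alpha}(p_f,p_g)=\frac{1}{\alpha(\alpha-1)}\int_S(p_f^\alpha-p_g^\alpha)\,dQ-\frac{1}{\alpha-1}\int_S p_g^{\alpha-1}(p_f-p_g)\,dQ,
\end{align*}
and bound it by the sum of the absolute values of the two terms. For the first term I would apply the mean value theorem to $x\mapsto x^\alpha$, giving $|p_f^\alpha-p_g^\alpha|\le\alpha\max(p_f,p_g)^{\alpha-1}|p_f-p_g|$ with $\max(p_f,p_g)\le\Vert p_f\Vert_\infty+\Vert p_f-p_g\Vert_\infty$; the prefactor $\frac{1}{\alpha(\alpha-1)}$ absorbs the $\alpha$ and yields the $(\Vert p_f\Vert_\infty+\Vert p_f-p_g\Vert_\infty)^{\alpha-1}$ summand. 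For the second term I would pull out $\Vert p_f-p_g\Vert_\infty$ and bound $\int_S p_g^{\alpha-1}\,dQ\le\Vert p_g\Vert_\infty^{\alpha-1}$; using $\Vert p_g\Vert_\infty\le\exp_{2-\alpha}(2\Vert g\Vert_\infty)$ (again from the two-sided bound on $A_\alpha$) together with the fact that $1/(\alpha-1)\ge1$ makes $[\exp_{2-\alpha}(2\Vert g\Vert_\infty)]^{\alpha-1}\le\exp_{2-\alpha}(2\Vert g\Vert_\infty)$, I obtain the second summand and combine.

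I expect the main obstacle to be the uniform control of $p_h(t)^{2-\alpha}$ along the entire segment: the constant $M_{\exp}$ must depend only on $f_0$, which forces me to use convexity of the $L^\infty$ ball to keep every intermediate $h$ within radius $R$, and to establish the two-sided estimate $-\Vert h\Vert_\infty\le A_\alpha(h)\le\Vert h\Vert_\infty$ on the deformed log-normalizer rather than only the single direction recorded in Claim \ref{claim:bound-deformed-log-normalizer}. Everything else reduces to routine manipulation of the $\beta$-exponential and integration against the probability measure $Q$.
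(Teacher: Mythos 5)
Your proposal is correct and follows essentially the same route as the paper: both parts rest on a mean value theorem along the segment $g+\eta(f-g)$, the Gateaux derivative of $A_\alpha$ from Lemma \ref{lemma:frechet-existence}, the two-sided bound $\vert A_\alpha(h)\vert\le\Vert h\Vert_\infty$ on the convex $L^\infty$ ball, and, for the Bregman estimate, the identical Taylor bound on $x\mapsto x^\alpha$ plus a H\"older bound on the gradient term. The only cosmetic difference is that you apply the mean value theorem pointwise in $t$ to $\eta\mapsto p_{g+\eta(f-g)}(t)$, whereas the paper applies a functional mean value inequality and bounds $\vert A_\alpha(f)-A_\alpha(g)\vert$ separately via $\Vert A'_\alpha\Vert_{\mathrm{op}}\le 1$; both routes produce the same factor $2M_{\exp}\Vert f-g\Vert_\infty$.
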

\begin{proof}
This Lemma mirrors Lemma A.1 in \cite{sriperumbudur2017density}, but the proof is very different as they rely on the property that $\exp(x+y)=\exp(x)\exp(y)$, which does not hold for $\beta$-exponentials. We thus had to strengthen the assumption to include that $f$ and $g$ lie in a closed ball, and then use the functional mean value theorem Lemma \ref{lemma:functional-MVT} as the main technique to achieve our result.

Consider that by functional mean value inequality,
\begin{align}
    \Vert p_f-p_g\Vert_{L^r}&=\Vert \exp_\beta(f-A_\alpha(f))-\exp_\beta(g-A_\alpha(g))\Vert_{L^r}\nonumber\\
    &\leq \Vert \exp_\beta(h-A_\alpha(h))^{2-\alpha}\Vert_\infty(\Vert f-g\Vert_\infty+\vert A_\alpha(f)-A_\alpha(g)\vert)
\end{align}
where $h=c f+(1-c)g$ for some $c\in [0,1]$. We need to bound $\exp_\beta(h-A_\alpha(h))$ and $\Vert A_\alpha(f)-A_\alpha(g)\Vert_\infty$.


We can show a bound on $\Vert h\Vert_\infty$
\begin{align*}
    \Vert h\Vert_\infty&=\Vert cf+(1-c)g-f_0+f_0\Vert_\infty\\
    &\leq \Vert c(f-f_0)+(1-c)(g-f_0)+f_0\Vert_\infty\\
    &\leq c \Vert f-f_0\Vert_\infty +(1-c)\Vert g-f_0\Vert_\infty +\Vert f_0\Vert_\infty\\
    &\leq \epsilon+\Vert f_0\Vert_\infty
\end{align*}
so that $h$ is bounded. Now we previously showed in claim \ref{claim:bound-deformed-log-normalizer} that $\vert A_\alpha(h)\vert \leq \Vert h\Vert_\infty\leq \epsilon+\Vert f_0\Vert_\infty$. Since $h,A_\alpha(h)$ are both bounded $\exp_\beta(h-A_\alpha(h))^{2-\alpha}$ is also.

Now note that by Lemma \ref{lemma:functional-MVT},
\begin{align*}
    \vert A_\alpha(f)-A_\alpha(g)\vert&\leq \Vert A_\alpha'(h)\Vert_{\textrm{op}}\Vert f-g\Vert_\infty
\end{align*}
We need to show that $\Vert A_\alpha'(h)\Vert_{\textrm{op}}$ is bounded for $f,g\in \overline{B}_\epsilon(f_0)$. Note that in Lemma \ref{lemma:frechet-existence} we showed that
\begin{align*}
    |A'_\alpha(f)(g)| &=\vert \mathbb{E}_{p_f^{2-\alpha}}[g(T)]\vert\\
    &\leq \Vert g\Vert_\infty
\end{align*}
Thus $\Vert A'_\alpha\Vert_{\textrm{op}}=\sup\{| A'_\alpha(h)(m)|\ :\Vert m\Vert_\infty=1\}\leq 1$. Let $M_{\exp}$ be the bound on $\exp_\beta(h-A_\alpha(h))$. Then putting everything together we have the desired result
\begin{align*}
    \Vert p_f-p_g\Vert_{L^r}&\leq 2M_{\exp}\Vert f-g\Vert_\infty
\end{align*}

Now
\begin{align}
    B_{\Omega_\alpha}(p_f,p_g)&=\Omega_\alpha(p_f)-\Omega_\alpha(p_g)-\langle \nabla \Omega_\alpha(p_g),p_f-p_g\rangle_{2}
\end{align}
For the inner prodct term, first note that following \cite{martins2020sparse} the gradient is given by
\begin{align}
    \nabla \Omega_\alpha(p_g)(t)&=\frac{p_g(t)^{\alpha-1}}{\alpha-1}
\end{align}
Thus
\begin{align*}
    \vert \langle \nabla \Omega_\alpha(p_g),p_f-p_g\rangle_{2}\vert &\leq \Vert\nabla \Omega_\alpha(p_g) \Vert_1\Vert p_f-p_g\Vert_\infty\\
    &=\frac{1}{\alpha-1}\int_S \exp_{2-\alpha}(g(t)-A(g))dQ\Vert p_f-p_g\Vert_\infty\\
    &\leq \frac{1}{\alpha-1}\exp_{2-\alpha}(2 \Vert g\Vert_\infty)\Vert p_f-p_g\Vert_\infty
\end{align*}
where the second line follows from claim \ref{claim:bound-deformed-log-normalizer}.

Further note that by Taylor's theorem,
\begin{align}
    y^\alpha&=x^\alpha+\alpha z^{\alpha-1}(y-x)
\end{align}
for some $z$ between $x$ and $y$. Then letting $y=p_f(t)$ and $x=p_g(t)$, we have for some $z=h(t)$ lying between $p_f(t)$ and $p_g(t)$ that
\begin{align*}
    p_f(t)^\alpha=p_g(t)^\alpha+\alpha h(t)^{\alpha-1}(p_f(t)-p_g(t))
\end{align*}
Since $f\in L^\infty$ then applying Claim \ref{claim:bound-deformed-log-normalizer} we have that each $p_f,p_g\in L^\infty$ and thus $h$ is. Then
\begin{align}
    \vert p_f(t)^\alpha-p_g(t)^\alpha\vert &= \alpha\vert h(t)\vert^{\alpha-1}\vert p_f(t)-p_g(t)\vert\nonumber\\
    &\leq \alpha\Vert h\Vert_\infty^{\alpha-1} \Vert p_f-p_g\Vert_\infty\nonumber\\
    &\leq \alpha\max\{\Vert p_f\Vert_\infty,\Vert p_g\Vert_\infty\}^{\alpha-1}\Vert p_f-p_g\Vert_\infty\nonumber\\
    &\leq \alpha (\Vert p_f\Vert_\infty+\Vert p_f-p_g\Vert_\infty)^{\alpha-1}\Vert p_f-p_g\Vert_\infty\label{eqn:bounding-diff-power}
\end{align}
so that
\begin{align*}
    \vert \Omega_\alpha(p_f)-\Omega_\alpha (p_g)\vert &=\left\vert\frac{1}{\alpha(\alpha-1)}\int (p_f(t)^\alpha-p_g(t)^\alpha)dQ\right\vert\\
    &\leq \frac{1}{\alpha-1}(\Vert p_f\Vert_\infty+\Vert p_f-p_g\Vert_\infty)^{\alpha-1}\Vert p_f-p_g\Vert_\infty.
\end{align*}
Putting it all together we obtain
\begin{align*}
    B_{\Omega_\alpha}(p_f,p_g)&\leq \frac{1}{\alpha-1}(\Vert p_f\Vert_\infty+\Vert p_f-p_g\Vert_\infty)^{\alpha-1}\Vert p_f-p_g\Vert_\infty\\
    &\quad+\frac{1}{\alpha-1}\exp_{2-\alpha}(2 \Vert g\Vert_\infty)\Vert p_f-p_g\Vert_\infty\\
    &=\frac{1}{\alpha-1}\Vert p_f-p_g\Vert_\infty[(\Vert p_f\Vert_\infty+\Vert p_f-p_g\Vert_\infty)^{\alpha-1}+\exp_{2-\alpha}(2 \Vert g\Vert_\infty)]
\end{align*}

\end{proof}

\begin{figure*}
    \centering
    \includegraphics[width=0.5\columnwidth]{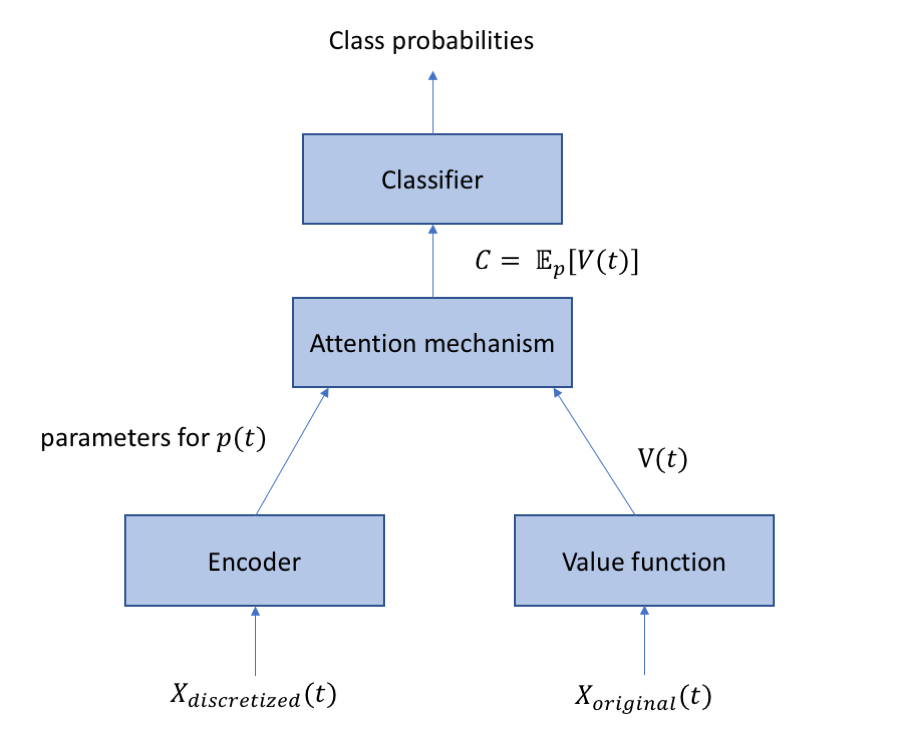}
    \caption{General architecture for classification using continuous attention mechanisms. The pipeline is trained end-to-end. The encoder takes a discretized representation of an observation (i.e. a time series) and outputs parameters for an attention density. The value function takes the original (potentially irregularly sampled) time series and outputs parameters for a function $V(t)$. These are then combined in an attention mechanism by computing a context vector $c=\mathbb{E}_p[V(T)]$. For some parametrizations of $p$ and $V(t)$ this can be computed in closed form, while for others it must be done via numerical integration. The context vector is then fed into a classifier.}\label{fig:attn-architecture}
\end{figure*}

\section{uWave Experiments: Additional Details}\label{appendix:uwave-supplement}

We experiment with $N=64,128$ and $256$ basis functions, and use a learning rate of $1e-4$. We use $H=100$ attention mechanisms, or heads. Unlike \cite{vaswani2017attention}, our use of multiple heads is slightly different as we use the same value function for each head, and only vary the attention densities. Additional architectural details are given below. 


\subsection{Value Function}

The value function uses regularized linear regression on the original time series observed at random observation times (which are not dependent on the data) to obtain an approximation $V(t;\textbf{B})=\textbf{B} \Psi(t)\approx X(t)$. The $H$ in Eqn. \ref{eqn:multivariate-regression} is the original time series.

\subsubsection{Encoder}

In the encoder, we use the value function to interpolate the irregularly sampled time series at the original points. This is then passed through a convolutional layer with $4$ filters and filter size $5$ followed by a max pooling layer with pool size $2$. This is followed by one hidden layer with $256$ units and an output $v$ of size $256$. The attention densities for each head $h=1,\cdots, H$ are then
\begin{align*}
    \mu_h&=w_{h,1}^T v\\
    \sigma_h&=\textrm{softplus}(w_{h,2}^Tv)\\
    \gamma_h &=W^{(h)}v
\end{align*}
for vectors $w_{h,1},w_{h,2}$ and matrices $W^{h}$ and heads $h=1,\cdots,H$

\subsubsection{Attention Mechanism}
After forming densities and normalizing, we have densities $p_1(t),\cdots,p_H(t)$, which we use to compute context scalars
\begin{align*}
    c_h=\mathbb{E}_{p_h}[V(T)]
\end{align*}
We compute these expectations using numerical integration to compute basis function expectations $\mathbb{E}_{p_h}[\psi_n(T)]$ and a parametrized value function $V(t)=B\psi(t)$ as described in section $\ref{sec:cts-attention}$.

\subsubsection{Classifier}
The classifier takes as input the concatenated context scalars as a vector. A linear layer is then followed by a softmax activation to output class probabilities.

\section{MIT BIH: Additional Details}\label{appendix:mit-bih}

Note that our architecture takes some inspiration for the $H$ that we use in our value function from a github repository\footnote{\verb|https://github.com/CVxTz/ECG_Heartbeat_Classification|}, although they used tensorflow and we implemented our method in pytorch.

\subsection{Value Function}

The value function regresses the output of repeated convolutional and max pool layers on basis functions, where the original time series was the input to these convolutional/max pooling layers. All max pool layers have pool size $2$. There are multiple sets of two convolutional layers followed by a max pooling layer. The first set of convolutional layers has $16$ filters and filter size $5$. The second and third each have $32$ filters of size $3$. The fourth has one with $32$ filters and one with $256$, each of size $3$. The final output has $256$ dimensions of length $23$. This is then used as our $H$ matrix in Eqn \ref{eqn:multivariate-regression}.

\subsection{Encoder}

The encoder takes the original time series as input. It has one hidden layer with a ReLU activation function and 64 hidden units. It outputs the attention density parameters.

\subsection{Attention Mechanism}

The attention mechanism takes the parameters from the encoder and forms an attention density. It then computes
\begin{align}
    c&=\mathbb{E}_p[V(T)]
\end{align}
for input to the classifier.

\subsection{Classifier}

The classifier has two hidden layers with ReLU activation and outputs class probabilities. Each hidden layer has $64$ hidden units.


\end{document}